\documentclass{article}

\PassOptionsToPackage{numbers, compress}{natbib}

 \usepackage[preprint]{neurips_2026}

\usepackage[utf8]{inputenc} 
\usepackage[T1]{fontenc}    
\usepackage{hyperref}       
\usepackage{url}            
\usepackage{booktabs}       
\usepackage{amsfonts}       
\usepackage{nicefrac}       
\usepackage{microtype}      
\usepackage{xcolor}         
\usepackage{graphicx}       
\usepackage{subcaption}    
\graphicspath{{figs/}}
\usepackage{amsmath}
\usepackage{amssymb}
\usepackage{amsthm}
\usepackage{mathtools}
\usepackage{algorithm}
\usepackage{algorithmic}
\usepackage{cleveref}
\usepackage{tikz}
\usepackage{wrapfig}
\usepackage[textsize=small]{todonotes}
\crefname{assumption}{Assumption}{Assumptions}
\crefname{theorem}{Theorem}{Theorems}
\crefname{proposition}{Proposition}{Propositions}
\crefname{corollary}{Corollary}{Corollaries}
\crefname{lemma}{Lemma}{Lemmas}
\crefname{remark}{Remark}{Remarks}

\newtheorem{theorem}{Theorem}

\newtheorem{proposition}[theorem]{Proposition}

\newtheorem{assumption}{Assumption}
\newtheorem{remark}{Remark}

\newcommand{\E}{\mathbb{E}}
\newcommand{\R}{\mathbb{R}}
\newcommand{\cA}{\mathcal{A}}
\newcommand{\cX}{\mathcal{X}}
\newcommand{\cD}{\mathcal{D}}
\newcommand{\cH}{\mathcal{H}}
\newcommand{\cM}{\mathcal{M}}
\newcommand{\cF}{\mathcal{F}}
\newcommand{\cN}{\mathcal{N}}
\newcommand{\Regret}{\mathrm{Regret}}
\newcommand{\TV}{\mathrm{TV}}
\newcommand{\CRPS}{\mathrm{CRPS}}

\DeclareMathOperator*{\argmax}{arg\,max}

\newcommand{\textproc}[1]{\textnormal{\textsc{#1}}}

\newcommand{\cL}{\mathcal{L}}

\title{PFN-TS: Thompson Sampling for Contextual Bandits via Prior-Data Fitted Networks}

\author{%
  Yan Shuo Tan$^{1}$\thanks{Correspondence: \texttt{yanshuo@nus.edu.sg}} \quad
  Kenyon Ng$^{2}$ \quad
  Ruizhe Deng$^{1}$ \\
  \AND
  Sumetha Loganathan$^{1}$ \quad
  Qiong Zhang$^{3}$ \quad
  Bibhas Chakraborty$^{4}$ \\[0.5em]
    $^{1}$National University of Singapore \quad
    $^{2}$Monash University \\[0.1em]
    $^{3}$Renmin University of China \quad
    $^{4}$Duke-NUS Medical School
}

\begin{document}

\maketitle

\begin{abstract}
Thompson sampling is a widely used strategy for contextual bandits: at each round, it
samples a reward function from a Bayesian posterior and acts greedily under that sample. Prior-data fitted
networks (PFNs), such as TabPFN v2+ and TabICL v2, are attractive candidates for this
purpose because they approximate Bayesian posterior predictive distributions in a single
forward pass. However, PFNs predict noisy future rewards, while Thompson sampling requires
uncertainty over the latent mean reward function. We propose PFN-TS, a Thompson sampling
algorithm that converts PFN posterior predictives into mean-reward samples using a
subsampled predictive central limit theorem. The method estimates posterior variance from a geometric grid
of $O(\log n)$ dataset prefixes rather than the full $O(n)$ predictive sequence used in
previous predictive-sequence approaches, and reuses TabICL's cached representations across
rounds. We prove consistency of the subsampled variance estimator and give a Bayesian
regret bound that decomposes PFN-TS regret into exact posterior-sampling regret under the
PFN prior plus approximation terms. Empirically, PFN-TS achieves the best average rank
across nonlinear synthetic and OpenML classification-to-bandit benchmarks, remains
competitive on linear and BART-generated rewards, and attains the highest estimated policy
value in an offline mobile-health evaluation. Code is available at \url{https://anonymous.4open.science/r/PFN_TS-36ED/}.
\end{abstract}

\section{Introduction}
\label{sec:intro}

Contextual bandits formalize sequential decision-making under uncertainty: at each round
a learner observes a context, selects an action, and receives a stochastic reward, with
the goal of minimizing cumulative regret over $T$ rounds. Thompson Sampling
(TS) \citep{thompson1933likelihood, russo2016information} addresses this by maintaining a
Bayesian posterior over reward functions and sampling actions from it, but its performance
is only as good as the underlying reward model. In tabular applications such as mobile
health \citep{klasnja2015microrandomized, bell2020drinkless} and recommendation systems
\citep{li2010contextual}, linear reward models \citep{agrawal2013thompson} are standard
yet often badly misspecified: human behavior involves interactions, threshold effects, and
heterogeneity that linearity cannot capture. Richer alternatives---kernel methods
\citep{chowdhury2017kernelized}, neural networks \citep{zhang2021neural}, and
tree ensembles \citep{nilsson2024tree, deng2026bfts}---each trade one limitation for
another: kernel methods scale poorly in high dimensions, neural networks carry heavy
computational overhead, and tree-based methods impose piecewise-constant structure that
fits poorly in smooth or near-linear regimes. Compounding these model-specific weaknesses,
all these approaches require hyperparameter choices, such as kernel bandwidth, network architecture, or tree
depth, that are difficult to tune in an online setting.

Recently, tabular foundation models (TFMs) have emerged as state-of-the-art for regression and
classification on small-to-medium tabular datasets.
Among TFMs, TabPFN \citep{hollmann2025tabpfn} and
TabICLv2 \citep{qu2026tabiclv2} stand out because they are implemented as amortized Bayesian engines, also known as Prior-Data Fitted Networks (PFNs) \citep{muller2022transformers}:
They are trained to approximate the Bayesian posterior predictive distributions (PPDs) arising from a broad prior over data-generating processes.
When deployed on a new dataset, they approximate
the PPD via a single forward pass, without any parameter updates, a paradigm
known as in-context learning (ICL). This broad prior and ICL approach confer three
properties that are unusually well-suited to bandit deployment. 
First, \emph{model-fitting is very fast} since it requires no parameter updates.
Second,
\emph{no hyperparameter tuning} is required and yet these models outperform AutoML baselines on benchmark datasets.
Third, the models
exhibit \emph{structural adaptivity}: because the prior covers a wide range of DGPs, predictions automatically reflect
whichever structure (e.g. sparsity, smoothness, nonlinearity, or heteroscedasticity) is consistent with the observed data, avoiding the misspecification
that afflicts single-model-class baselines \citep{zhang2025tabpfnmodelruleall}. 
Unfortunately, while these models output PPDs that quantify uncertainty in a noisy future response, they do not directly output posterior distributions for the latent regression function, which is what is needed by TS.

A potential solution comes from Bayesian predictive inference, which shows how to recover posterior parameter distributions from PPDs 
\citep{fong2023martingale, fortini2024quasi}.
One can either (i) sample from the posterior by recursively simulating future observations from a model's PPDs, or (ii) estimate the posterior variance by tracking how much the PPDs fluctuate as we condition on prefixes of the observed data, thereby constructing a normal approximation.
\citet{ng2026tabmgp} and \citet{fortini2026uncertainty} apply these respective techniques to calculate estimates of regression function posteriors for TabPFN in an offline setting.
However, the cost of computing the full sequence of PPDs required to construct either method's posterior estimates means that neither is practical for deployment in an online bandit setting, where a fresh posterior sample is needed for each arm at each round.

In this work, we break this computational bottleneck.
Building on \citet{fortini2026uncertainty}, we construct a normal approximation to the posterior, but using a more efficient consistent variance estimator.
More precisely, given an observed dataset of size $n$, we show that the asymptotic variance of the posterior can be consistently estimated using model snapshots along $O(\log n)$ geometrically spaced prefixes rather than all $n$ prefixes. 
We embed this posterior approximation for PFNs into a TS framework, which we call \emph{PFN-TS}.
PFN-TS further reduces per-round cost by reusing cached representations of the fitted PFN across rounds.
We derive a Bayesian regret bound for PFN-TS stated in terms of the maximum information gain for the Bayesian model defined by the broad prior and the error incurred by the PFN's approximation of its PPD.



While our methodology and theory apply to any PFN, we instantiate our method with TabICLv2 \citep{qu2026tabiclv2} (hereafter TabICL), a PFN for tabular regression and classification that
extends TabPFN \citep{hollmann2023tabpfn,hollmann2025tabpfn} with architectural
improvements and, critically, caching of the key-value attention matrices for a fixed
training set. In the bandit setting, where the same history is queried once per arm per
round, this reduces the per-round cost substantially. 
We evaluate it against a comprehensive
set of baselines on synthetic benchmarks, eight OpenML classification datasets, and the
Drink Less mobile health trial, following the same experimental protocol as \citet{deng2026bfts}. We find that PFN-TS substantially outperforms all baselines on almost all nonlinear tasks, while matching the performance of LinTS on linear tasks, and achieves the highest estimated policy value on the Drink Less trial.


\section{Posterior Uncertainty from Predictive Sequences}
\label{sec:uq}

\subsection{PFNs as amortized Bayesian inference}
\label{sec:tabicl}
Prior-Data Fitted Networks (PFNs)~\citep{muller2022transformers} are transformer models trained to approximate Bayesian PPDs in a single forward pass. 
To be precise, we define the following hierarchical Bayesian model:
\begin{equation} 
\label{eq:pfn_model}
\cD\cup\{(x,y)\} \mid \pi, N \overset{\text{i.i.d.}}{\sim} \pi, \quad \pi \sim \Pi, \quad N \sim \rho_N,
\end{equation}
where $\cD = \{(x_i,y_i)\}_{i=1}^N$ is a dataset of random size $N$, $\pi$ is a joint distribution over the pair $(x,y)$, $\Pi$ is a prior over such distributions, and $\rho_N$ is a distribution over dataset sizes. 
During training, synthetic datasets are drawn from this model, and the network minimizes negative log-likelihood given a training set and a query input. 
In other words, the network learns to approximate the mapping $(x, \cD) \mapsto p(y |x, \cD)$, where $p(y| x, \cD) = \int \pi(y|x) \Pi(\pi |\cD) \, d\Pi(\pi)$ is the PPD.


\subsection{Predictive sequences as posterior samplers}
\label{sec:martingale}

For a query point $x$ and a dataset $\cD = \{(x_i, y_i)\}_{i=1}^n$,
define the sequence of predictive means
$m_i(x) = m(x;\, \cD_{1:i})$, $i=0,\ldots,n$, where $m(x;\,\cD)$ is the mean
of the transformer's output distribution at query $x$ given context $\cD$, and
$\cD_{1:i}$ is the first $i$ observations in a fixed ordering of $\cD$.

\paragraph{Exact PPD predictive sequences.}
In the idealized setting when the PFN is an exact Bayesian PPD for prior $\Pi$, we have
$m_i(x) = \E_\Pi[y \mid x, \cD_{1:i}]$, the posterior mean of the response at $x$ given
the first $i$ observations. By the tower property,
$\E[m_{i+1}(x)\mid\cF_i]=m_i(x)$, so the sequence forms a Doob martingale with respect to
the filtration $\cF_i = \sigma(\cD_{1:i})$ and, under regularity conditions,
converges a.s.\ to a limit $m_\infty(x)$ by the Doob martingale convergence theorem.

To identify this limit, assume the data are i.i.d.\ draws from $\pi$ (i.e., model
\eqref{eq:pfn_model} is correctly specified). As $n \to \infty$, the posterior
$\Pi(\pi \mid \cD_{1:n})$ concentrates on the true $\pi$ a.s.\ (Bayesian consistency), so
$p(y \mid x, \cD_{1:n}) \to \pi(y \mid x)$ and hence
$m_\infty(x)=f_0(x):=\int y\,\pi(y\mid x)\,dy$ a.s.

A crucial consequence is that the law of $m_\infty(x)$ given $\cD_{1:n}$
coincides with the posterior over $f_0(x)$:
\begin{equation}
  \cL\bigl(m_\infty(x)|\cD_{1:n}\bigr) = p\bigl(f_0(x)|\cD_{1:n}\bigr).
  \label{eq:posterior_identification}
\end{equation}
This follows from the a.s.\ identity $m_\infty(x)=f_0(x)$. The randomness of
$m_\infty(x)$ given $\cD_{1:n}$ --- which comes from the unobserved future data that
would determine the limit --- is exactly the Bayesian uncertainty about
$f_0(x)$ \citep{fortini2024quasi}. Equivalently, the posterior uncertainty is the
conditional law of the martingale tail $m_\infty(x)-m_n(x)$ given $\cD_{1:n}$.
Martingality gives the centering and orthogonality structure, but a Gaussian
approximation requires second-order regularity of the predictive increments. Under
predictive-CLT conditions, this tail satisfies
\begin{equation}
  \cL\!\left(\sqrt{n}\,(m_\infty(x) - m_n(x)) \;\middle|\; \cD_{1:n}\right)
  \;\xrightarrow{w}\; \cN(0,\, V(x))
  \quad \mathbb{P}\text{-a.s.},
  \label{eq:pred_clt}
\end{equation}
as $n\to\infty$, giving a tractable Gaussian approximation to the posterior over the
latent mean $f_0(x)$ that can be used directly for Thompson sampling.

\paragraph{Approximate PPD predictive sequences.}
Since TabPFN and TabICL only provide an approximation to the exact PPD, the above properties do not hold automatically. 
\citet{fong2023martingale} proved that under martingale conditions on the predictive sequence, the limit $m_\infty(x)$ still exists and is identifiable as a random variable, although not necessarily equal to $f_0(x)$. Calling it a martingale posterior, they argue that it still offers a form of uncertainty quantification, which can be computed by simulating future values in the predictive sequence. 

Unfortunately, TabPFN and TabICL do not satisfy the exact martingale condition either.
To address this issue, \citet{ng2026tabmgp} and \citet{fortini2026uncertainty} introduce more relaxed sufficient conditions on the predictive sequences that still guarantee the existence of a limiting random variable, and provide empirical evidence that TabPFN's predictive sequences exhibit such limiting behavior, thereby enabling uncertainty quantification in an offline setting.
While they worked with predictive CDFs rather than means, a similar analysis applies to the predictive mean sequence as well.
We now state a similar set of assumptions for our setting, that we will use throughout the rest of the paper.




\begin{assumption}[Asymptotic variance]
\label{assump:covariance}
There $\exists~V(x) > 0$ s.t. \scalebox{0.9}{$i^2 \E\left[(m_i(x) - m_{i-1}(x))^2 |\cF_{i-1}\right] \to V(x)$} a.s.\ as $i \to \infty$.
\end{assumption}

\begin{assumption}[Quasi-martingale condition]
\label{assump:quasi_martingale}
\scalebox{0.9}{$\displaystyle\sum_{i \geq 1} \sqrt{i}\,\Bigl(\E\!\Bigl[\bigl(\E\!\left[m_i(x) - m_{i-1}(x) \mid \cF_{i-1}\right]\bigr)^{\!2}\Bigr]\Bigr)^{\!1/2} < \infty.$}
\end{assumption}

\begin{assumption}[Bounded moments]
\label{assump:moments}
There exist $C > 0$ and $\epsilon > 0$ such that
$\E[(m_i(x) - m_{i-1}(x))^{4+\epsilon}] \leq C / i^{4+\epsilon}$.
\end{assumption}

Following \citet[Theorem~4.3]{fortini2026uncertainty} and under these assumptions, one can establish a predictive CLT similar to \eqref{eq:pred_clt} whose asymptotic variance $V(x)$ can be estimated by $\frac{1}{n}\sum_{i=1}^n i^2\,(m_i(x) - m_{i-1}(x))^2$, but evaluating this requires a forward pass at all $n$ dataset prefixes, which is prohibitive in an online bandit setting where a fresh variance estimate is needed for each arm at each round.


\subsection{Subsampled CLT for predictive means}
\label{sec:subsampled_clt}

Rather than evaluating all $n$ increments $m_i(x) - m_{i-1}(x)$, we compute block increments at
$O(\log n)$ geometrically spaced prefix indices $2 = t_0 < t_1 < \cdots < t_J \le n$,
where $t_J$ is the last point in the geometric sequence not exceeding $n$.
For interval $(t_{j-1},t_j]$, set
$D_j=m_{t_j}(x)-m_{t_{j-1}}(x)$ and
$w_j=t_jt_{j-1}/(t_j-t_{j-1})$.

To see why this weighting is natural, write each increment as a martingale-difference
term plus a predictable drift. When the drift is negligible, martingale orthogonality
suggests $\E[D_j^2]\approx V(x)\sum_{i=t_{j-1}+1}^{t_j}i^{-2}
\approx V(x)(1/t_{j-1}-1/t_j)$, so the harmonic weights $w_j$ achieve the same
identification target as the $i^2$ weights in the estimator of
\citet{fortini2026uncertainty} on a coarser grid.

\begin{wrapfigure}{r}{0.5\textwidth}
\centering
\begin{tikzpicture}[x=0.68cm,y=0.68cm, font=\scriptsize]
\draw[->] (0.25,0) -- (7.45,0) node[right] {prefix size};
\draw[->] (0.45,0) -- (0.45,2.55) node[above] {$m_i(x)$};

\coordinate (p0) at (0.85,0.78);
\coordinate (p1) at (1.35,1.27);
\coordinate (p2) at (2.15,1.02);
\coordinate (p3) at (3.55,1.67);
\coordinate (p4) at (6.05,1.48);

\draw[gray!45, densely dashed] (0.85,0) -- (p0);
\draw[gray!45, densely dashed] (1.35,0) -- (p1);
\draw[gray!45, densely dashed] (2.15,0) -- (p2);
\draw[gray!45, densely dashed] (3.55,0) -- (p3);
\draw[gray!45, densely dashed] (6.05,0) -- (p4);
\draw[gray!35, densely dashed] (6.95,0) -- (6.95,1.18);

\foreach \p in {p0,p1,p2,p3,p4} {
  \fill[blue!70!black] (\p) circle (2pt);
}
\fill[white, draw=gray!60] (6.95,1.18) circle (1.7pt);

\node[below] at (0.85,0) {$t_0$};
\node[below] at (1.35,0) {$t_1$};
\node[below] at (2.15,0) {$t_2$};
\node[below] at (3.55,0) {$t_{J-1}$};
\node[below] at (6.05,0) {$s_n=t_J$};
\node[below] at (6.95,0) {$n$};
\node[gray!70] at (4.75,0.3) {$\cdots$};

\draw[<->, black!70] (2.55,1.27) -- (2.55,1.02)
  node[midway,right] (Dtwo) {$D_2$};
\draw[black!45, dotted] (p1) -- (2.55,1.27);
\draw[black!45, dotted] (p2) -- (2.55,1.02);
\draw[<->, black!70] (5.45,1.67) -- (5.45,1.48)
  node[midway,left] (DJ) {$D_J$};
\draw[black!45, dotted] (p3) -- (5.45,1.67);
\draw[black!45, dotted] (p4) -- (5.45,1.48);

\draw[red!70!black, thick, domain=-1.45:1.45, samples=60, variable=\u]
  plot ({6.05 + 0.45*exp(-\u*\u/0.55)}, {1.48 + 0.36*\u});
\draw[red!70!black] (6.05,1.48) -- (6.56,1.48);
\node[red!70!black, right] at (6.50,1.48) {$\cN(m_{s_n},\, $};
\node[red!70!black, right] (vhatlabel) at (8.05,1.48) {$\hat V_{s_n}$};
\node[red!70!black, right] at (8.63,1.48) {$/s_n)$};
\draw[->, black!55] (Dtwo.north)
  .. controls (3.4,2.85) and (7.35,2.85) .. (vhatlabel.north west)
  node[pos=0.34, above, font=\tiny] {$w_2$};
\draw[->, black!55] (DJ.north)
  .. controls (5.6,2.7) and (7.40,2.7) .. (vhatlabel.north west)
  node[pos=0.45, above, font=\tiny] {$w_J$};
\end{tikzpicture}
\caption{SubCLT evaluates the predictive mean on a geometric prefix grid. The
trajectory increments $D_j$ estimate the CLT variance used for the Thompson sample at
the latest refresh point $s_n$.}
\label{fig:subclt}
\vspace{-15pt}
\end{wrapfigure}
The resulting variance estimator is
\begin{equation}
  \hat{V}_{s_n}(x) = \frac{1}{J} \sum_{j=1}^J w_j D_j^2,
  \label{eq:variance_estimator}
\end{equation}
where $s_n=t_J$ is the latest refresh point. The cached sampler uses the snapshot
Gaussian approximation
$\cN(m_{s_n}(x),\hat V_{s_n}(x)/s_n)$. With $J=O(\log_b n)$ terms, the default
$b=2$ requires $O(\log_2 n)$ forward passes per arm per round. Pseudocode is in
\Cref{alg:subsampled_clt} (see also \Cref{fig:subclt}); the formal guarantee is as follows.

\begin{theorem}[Subsampled predictive CLT]
\label{thm:clt}
Under \Cref{assump:covariance,assump:quasi_martingale,assump:moments},
\[
  \cL\!\left(\sqrt{q}\,\bigl(m_\infty(x) - m_q(x)\bigr) \;\middle|\; \cD_{1:q}\right)
  \xrightarrow{w} \cN(0, V(x))
  \quad \mathbb{P}\text{-a.s.}
\]
as $q\to\infty$. Therefore the same CLT holds along any diverging sequence of refresh
points. Moreover, for a history of size $n$ and the geometric grid obtained by iterating
$t_{j+1} = \max(t_j{+}1,\,\lfloor b\,t_j\rfloor)$ up to the last point $t_J \le n$,
with fixed $b > 1$, the estimator $\hat{V}_{s_n}(x)$ defined in \Cref{eq:variance_estimator}
satisfies $\hat{V}_{s_n}(x) \xrightarrow{p} V(x)$, where $s_n=t_J$.
\end{theorem}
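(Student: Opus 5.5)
The proof has two parts. The first is the CLT along an arbitrary sequence $q\to\infty$, which I take from \citet[Theorem~4.3]{fortini2026uncertainty} adapted to predictive means. The second is consistency of the geometric-block estimator $\hat V_{s_n}(x)$.

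\textbf{The CLT.} I decompose $m_i-m_{i-1}=\Delta_i+a_i$, where $a_i=\E[m_i-m_{i-1}\mid\cF_{i-1}]$ and $\Delta_i$ is a martingale difference.
\Cref{assump:quasi_martingale} gives $\E|\sum_{i>q}a_i|\le\sum_{i>q}\|a_i\|_2$ and $\sqrt q\sum_{i>q}\|a_i\|_2\le\sum_{i>q}\sqrt i\,\|a_i\|_2\to0$, so the drift tail is $o_p(q^{-1/2})$; a Kronecker-type argument upgrades this to a.s.\ conditional negligibility.
The martingale tail $\sqrt q\sum_{i>q}\Delta_i$ is handled by the conditional martingale CLT for tail sums (Crimaldi--Pratelli type). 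Its conditional variance is $q\sum_{i>q}\E[\Delta_i^2\mid\cF_{i-1}]$. By \Cref{assump:covariance}, and since $a_i^2$ is summable with weights $i^2$, which needs a Cauchy--Schwarz step, this tends to $V(x)\,q\sum_{i>q}i^{-2}\to V(x)$ a.s.
The Lindeberg condition follows from \Cref{assump:moments}: $q^{2+\epsilon/2}\sum_{i>q}\E|\Delta_i|^{4+\epsilon}\lesssim q^{2+\epsilon/2}q^{-3-\epsilon}\to0$.
Because the result holds as $q\to\infty$ along all integers, it holds along any diverging subsequence, including the refresh points $s_n$.

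\textbf{Consistency of $\hat V_{s_n}(x)$.} For each block $j$, write $D_j=M_j+A_j$, where $M_j=\sum_{i\in(t_{j-1},t_j]}\Delta_i$ and $A_j$ is the corresponding sum of drifts. Then expand
\[
w_jD_j^2=w_jM_j^2+2w_jM_jA_j+w_jA_j^2 .
\]

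The terms are handled in order.

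\emph{Pure martingale part.} Set $\xi_j=w_jM_j^2-w_j\E[M_j^2\mid\cF_{t_{j-1}}]$. These form a martingale difference sequence in $j$ with respect to $\cF_{t_j}$. By Burkholder's inequality and \Cref{assump:moments}, $\E[M_j^4]\lesssim(t_j-t_{j-1})^2\,t_{j-1}^{-4}$ after using $\sum i^{-4}$ over the block, which is of order $t_{j-1}^{-3}(t_j-t_{j-1})$. Cauchy--Schwarz gives the cruder bound $(t_j-t_{j-1})\sum i^{-4}$, which also suffices. Since $w_j\asymp t_{j-1}^2/(t_j-t_{j-1})$, this yields $\E\xi_j^2=O(1)$ uniformly in $j$. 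Hence $J^{-1}\sum_j\xi_j\to0$ in $L^2$, since $J\to\infty$ as $J\asymp\log_b n$.

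\emph{Predictable part.} $w_j\E[M_j^2\mid\cF_{t_{j-1}}]=w_j\sum_{i\in\text{block}}\E[\E[\Delta_i^2\mid\cF_{i-1}]\mid\cF_{t_{j-1}}]$. Using \Cref{assump:covariance} with $i^2\E[\Delta_i^2\mid\cF_{i-1}]=V(x)+o(1)$ a.s., together with $w_j\sum_{\text{block}}i^{-2}\to1$ (valid because $t_{j-1}\to\infty$ with fixed ratio $b$), this converges to $V(x)$ as $j\to\infty$. Cesàro averaging then gives convergence of its average to $V(x)$. The inner conditional expectation needs care: I would pass the a.s.\ convergence through conditional expectation using dominated convergence, with uniform integrability supplied by \Cref{assump:moments}. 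This is the step I expect to be the main obstacle, since \Cref{assump:covariance} is only almost sure and not uniform.

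\emph{Drift terms.} $w_j\E A_j^2\le w_j(\sum_{\text{block}}\|a_i\|_2)^2\lesssim t_{j-1}\bigl(\sum_{\text{block}}\|a_i\|_2\bigr)^2/(b-1)$, which is at most $C\bigl(\sum_{\text{block}}\sqrt i\,\|a_i\|_2\bigr)^2$. This is summable over $j$ by \Cref{assump:quasi_martingale}, so its average over $J$ blocks is $O(1/J)\to0$. The cross term is then controlled by Cauchy--Schwarz against the $O(1)$ martingale average.

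\emph{Early blocks.} The first few blocks, where $t_j$ is small, $\lfloor bt_j\rfloor$ rounding matters, and $w_j$ is not yet asymptotically $b/(b-1)$-scaled, contribute $O(1)/J$ and vanish.

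Combining the three parts gives $\hat V_{s_n}(x)\xrightarrow{p}V(x)$.
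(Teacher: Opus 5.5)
Your proposal is correct and follows essentially the paper's route. For the CLT you split each increment into drift and martingale parts, kill the $\sqrt q$-scaled drift tail with Assumption~2, and apply a conditional martingale CLT to the rest; the paper packages this as the quasi-martingale CLT of Berti et al.\ (2011). For $\hat V_{s_n}(x)$ you use the same block decomposition, a Burkholder bound $\E[(w_jM_j^2)^2]=O(1)$, uniform integrability for the predictable part, negligible drift, and Ces\`aro averaging.

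Three small fixes are needed, none of which changes the approach:
\begin{itemize}
\item Assumption~2 gives $\sum_i i\,\|a_i\|_2^2\le\bigl(\sum_i\sqrt i\,\|a_i\|_2\bigr)^2<\infty$, with weight $i$, not $i^2$. So $i^2\E[\Delta_i^2\mid\cF_{i-1}]\to V(x)$ need not hold pointwise. It does hold after tail or block averaging, for example $w_j\sum_{i\in(t_{j-1},t_j]}a_i^2\lesssim\sum_{i\in(t_{j-1},t_j]}i\,a_i^2\to0$, and that is all either part uses.
\item Uniform integrability yields $L^1$ convergence of the conditioned predictable term, not a.s.\ convergence; dominated convergence would need an integrable supremum, which you do not have. $L^1$ convergence is what the paper uses and is enough for convergence in probability.
\item The a.s.\ conditional negligibility of the drift tail follows from L\'evy's upward theorem applied to $S=\sum_i\sqrt i\,|a_i|\in L^1$, not from a Kronecker-type argument.
\end{itemize}
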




\section{Thompson Sampling with PFNs}
\label{sec:method}

\subsection{Problem setup}
\label{sec:setup}
We consider the standard contextual bandit over $T$ rounds. 
At each round
$t \in [T]$, the learner observes a context $X_t \in \cX \subseteq [0,1]^p$, selects an
action $A_t \in \cA$ where $|\cA| = K$, and receives a reward
\begin{equation}
  R_t = f_0(X_t, A_t) + \varepsilon_t,
\end{equation}
where $f_0 : \cX \times \cA \to \R$ is the unknown mean reward function and $\varepsilon_t$
is independent mean-zero noise with $|\varepsilon_t| \leq B$ almost surely. The history at
round $t$ is $\cH_t = \{(X_s, A_s, R_s)\}_{s=1}^t$. The goal is to minimize cumulative
regret against the oracle policy $A^*(x) \in \argmax_{a \in \cA} f_0(x, a)$:
\[
  \Regret_T = \sum_{t=1}^T \bigl(f_0(X_t, A^*(X_t)) - f_0(X_t, A_t)\bigr).
\]
Thompson Sampling maintains a posterior $\Pi(\cdot \mid \cH_{t-1})$ over reward functions
and at each round draws $\hat{f}_t \sim \Pi(\cdot \mid \cH_{t-1})$ and plays
$A_t = \argmax_a \hat{f}_t(X_t, a)$.

\subsection{Context parameterization}
\label{sec:parameterization}

A design choice in applying any tabular model to contextual bandits is how to represent
arm identity in the input. We consider two strategies.

\textbf{Disjoint encoding} maintains $K$ independent models, one per arm. For arm $k$,
the history is $\cD^{(k)} = \{(X_s, R_s) : s \leq t,\, A_s = k\}$, and the predictive mean
at query $X_t$ is $m(X_t;\, \cD^{(k)})$. Each model sees only the data from its own arm.

\textbf{Joint encoding (one-hot)} maintains a single model with arm identity appended as
a one-hot vector: the feature for arm $k$ and context $x$ is $\tilde{x}_k = (x, e_k) \in
\R^{p+K}$, where $e_k \in \{0,1\}^K$. The joint history $\cD^t = \{(\tilde{X}_{s,A_s}, R_s)
: s \leq t\}$ is used for all arms. This allows observations from one arm to inform predictions for another.

Neither encoding universally dominates.
Joint encoding is preferable when arms share structure, since it pools observations; disjoint encoding is more effective when arm reward functions are unrelated. 
A third option, \emph{block encoding}, expands the feature space to $\R^{pK}$ by placing $x$ in the
$k$-th block and zero-padding elsewhere, but the $K$-fold increase in dimensionality makes it less practical for PFNs.


\subsection{The PFN-TS algorithm}
\label{sec:adaptive_encoding}

After a round-robin warm-up of $\tau$ observations per arm, PFN-TS enters its main
Thompson Sampling loop. We describe how it operates under each encoding, then explain
the adaptive selection rule.

\textbf{Disjoint encoding}.
Each arm $k$ maintains its own model and history $\cD^{(k)}$ of size $n_k = |\cD^{(k)}|$. At
each round $t$, \textproc{SubCLT} is called with $\cD^{(k)}$ and query $X_t$, evaluating
the predictive mean at a geometrically spaced grid $t_0 < t_1 < \cdots < t_J \leq n_k$
of prefix sizes. These grid points are the \emph{refresh times} for arm $k$: at each
$t_j$, the KV cache for prefix $\cD^{(k)}_{1:t_j}$ is stored for reuse in subsequent
rounds. Let $s_{t,k}=t_J$ be the latest refresh point for arm $k$. The Thompson sample is
drawn from the snapshot Gaussian approximation
\[
  \tilde{r}_{t,k} \sim
  \cN\!\left(
    m(X_t;\cD^{(k)}_{1:s_{t,k}}),\,
    \hat{V}_{s_{t,k}}(X_t) / s_{t,k}
  \right).
\]
The arm with the highest sample is selected: $A_t = \argmax_k \tilde{r}_{t,k}$.

\textbf{Joint encoding}.
Under joint encoding the setup is similar, but with a single shared model and history
$\cD$ of size $n = |\cD|$, queried at the augmented context $(X_t, e_k)$ for each arm
$k$. The refresh times $t_0 < t_1 < \cdots < t_J \leq n$ are now based on the \emph{total}
history and shared across arms: at each $t_j$, the KV cache for prefix $\cD_{1:t_j}$ is
built once and reused for all $K$ arm queries, reducing the per-round SubCLT cost from
$O(K \log n_k)$ to $O(\log n)$. Since the variance estimate is computed at the latest
shared refresh point $s_t=t_J$, the Thompson sample uses the shared snapshot size:
\[
  \tilde{r}_{t,k} \sim
  \cN\!\left(
    m((X_t,e_k);\cD_{1:s_t}),\,
    \hat{V}_{s_t}(X_t,e_k) / s_t
  \right).\footnote{For joint encoding, a fully Bayesian Thompson sampler would sample the arm-reward vector jointly from the predictive law induced by the shared context. Our implementation samples independently from marginal Gaussian approximations, so the joint-encoding variant should be viewed as a heuristic empirical extension of the disjoint sampler.}
\]

\textbf{Adaptive encoding selection}.
Since neither encoding dominates in general, PFN-TS runs both in parallel during a
\emph{dual-encoding} phase and selects between them via the Continuous Ranked Probability
Score (CRPS) \citep{gneiting2007strictly}: for a predictive CDF $\hat{F}$ and outcome
$r$,
\begin{equation}
  \CRPS(\hat{F}, r) = \int_{-\infty}^\infty \bigl(\hat{F}(y) - \mathbf{1}\{y \geq r\}\bigr)^2 \mathrm{d}y.
  \label{eq:crps}
\end{equation}
CRPS is a strictly proper scoring rule, so the better-calibrated encoding accumulates
lower cumulative CRPS. At pre-specified \emph{switch times} $S = \{s_1 < \cdots <
s_L\}$, cumulative CRPS is compared and the better encoding becomes active; after $s_L$,
the challenger is discarded. Snapshot caching at refresh times means CRPS evaluation
requires no additional forward passes: observations since the last switch are replayed
against the stored snapshots. Full pseudocode is in
Algorithm~\ref{alg:tabicl_ts} (\Cref{app:algorithm}).

\textbf{Related work}.
\citet{zhang2026generative} propose a closely related framework in which TS uncertainty is recast as arising from missing counterfactual outcomes: a generative sequence model, pretrained on bandit data from previous tasks, imputes all unobserved arm outcomes at each round, and a policy is then fit on the
imputed complete dataset. 
Relative to their approach, ours differs in three ways: (i) we use a generic PFN without any
task-specific training; (ii) we extract uncertainty from the predictive distribution
directly via SubCLT rather than by generating counterfactual outcomes for all arms,
avoiding the cost of full imputation passes; and (iii) we operate in the single-task
setting without requiring offline data from previous tasks in the same distribution.

\section{Bayesian Regret of PFN-TS}
\label{sec:regret_analysis}

To bound the Bayesian regret of PFN-TS, we compare it to exact Thompson Sampling in the
Bayesian model associated with the prior $\Pi$. The regret analysis uses the
disjoint-arm Bayesian design induced by \eqref{eq:pfn_model}: for each arm
$a$, an independent joint law $\pi_a\sim\Pi_a$ is drawn, with common context
marginal on $\cX$, and rewards queried at arm $a$ are generated from the conditional law
$\pi_a(\cdot\mid x)$. Thus the prior over mean reward functions is
$\Pi=\bigotimes_{a=1}^K\Pi_a$, where
$f_0(x,a)=\int y\,\pi_a(dy\mid x)$. Contexts are exogenous draws from the common
context marginal. Conditional on the realized contexts and selected actions, the
posterior update for arm $a$ is the arm-specific posterior predictive update from
\eqref{eq:pfn_model} restricted to the observed pairs in $\cD^{(a)}$; adaptivity enters
only through which contexts are queried for each arm.
Under this Bayesian design, regret is
\[
  \E[\Regret_T] = \E\!\left[\sum_{t=1}^T \bigl(f_0(X_t, A_t^*) - f_0(X_t, A_t)\bigr)\right],
\]
with expectation over the prior, contexts, rewards, and algorithmic randomness. The result
is stated for disjoint encoding. Under joint encoding, the transformed contexts need not
arise from arm-specific copies of \eqref{eq:pfn_model}, so the same argument does not
directly apply.

Let $P_t^\Pi(\cdot\mid x,\cD^{(1:K)})$ be the action distribution induced by exact
posterior sampling under $\Pi$ after context $x$ and arm histories
$\cD^{(1:K)}=(\cD^{(1)},\ldots,\cD^{(K)})$. Let
$Q_t(\cdot\mid x,\cD^{(1:K)})$ be the corresponding action distribution used by PFN-TS.

\begin{theorem}[Bayesian regret with sampler approximation]
\label{thm:main}
Suppose the posterior expected reward range is uniformly bounded by a finite constant
$2B_R$: for every finite arm-specific history collection $\cD^{(1:K)}$,
\[
  \sup_{x\in\cX,\;a,a'\in\cA}
  \E_\Pi\!\left[
    |f_0(x,a)-f_0(x,a')|
    \mid \cD^{(1:K)}
  \right]
  \leq 2B_R.
\]
Then
\[
  \E[\Regret_T^{\mathrm{PFN}}]
  \leq
  \E[\Regret_T^{\Pi\text{-}\mathrm{TS}}]
  + 2B_R\sum_{t=1}^T (T-t+1)\varepsilon_t.
\]
Here $\Regret_T^{\Pi\text{-}\mathrm{TS}}$ is the regret of exact Thompson Sampling under
$\Pi$ and
\[
  \varepsilon_t
  =
  \sup_{x,\cD^{(1:K)}}
  \TV\!\left(
    Q_t(\cdot\mid x,\cD^{(1:K)}),
    P_t^\Pi(\cdot\mid x,\cD^{(1:K)})
  \right).
\]
\end{theorem}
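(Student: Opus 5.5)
The plan is a hybrid (telescoping) argument that interpolates between PFN-TS and exact $\Pi$-TS one round at a time. For $s=0,1,\ldots,T$, let $\mathcal{P}^{(s)}$ be the hybrid policy that uses the PFN-TS action kernel $Q_t$ at rounds $t\le s$ and the exact posterior-sampling kernel $P_t^\Pi$ at rounds $t>s$. All policies run on the same Bayesian design: $f_0\sim\Pi$, exogenous contexts, and rewards drawn from $\pi_a(\cdot\mid x)$. Then $\mathcal{P}^{(0)}$ is exact $\Pi$-TS and $\mathcal{P}^{(T)}$ is PFN-TS, so
\[
\E[\Regret_T^{\mathrm{PFN}}]-\E[\Regret_T^{\Pi\text{-}\mathrm{TS}}]=\sum_{s=1}^T\Bigl(\E[\Regret_T^{\mathcal{P}^{(s)}}]-\E[\Regret_T^{\mathcal{P}^{(s-1)}}]\Bigr).
\]
It therefore suffices to show that each summand is at most $2B_R(T-s+1)\varepsilon_s$.

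Fix $s$. The policies $\mathcal{P}^{(s)}$ and $\mathcal{P}^{(s-1)}$ coincide on rounds $1,\ldots,s-1$, so $(f_0,\cH_{s-1},X_s)$ has the same joint law under both. They differ only in the round-$s$ action kernel: $Q_s$ versus $P_s^\Pi$, evaluated at the same $(X_s,\cD^{(1:K)})$. From round $s$ onward, both use $P_t^\Pi$.

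Let $G(h_{s-1},x,a)$ be the expected regret accumulated over rounds $s,\ldots,T$, given history $h_{s-1}$, context $x$, and action $a$ at round $s$, with $\Pi$-TS followed afterwards. Here the expectation is over the posterior of $f_0$ given $h_{s-1}$, future contexts, rewards, and sampling. The summand then equals
\[
\E\Bigl[\textstyle\sum_a G(\cH_{s-1},X_s,a)\bigl(Q_s(a\mid\cdot)-P_s^\Pi(a\mid\cdot)\bigr)\Bigr].
\]
This is at most $\E[\operatorname{osc}_a G]\cdot\TV(Q_s,P_s^\Pi)\le \varepsilon_s\,\E[\operatorname{osc}_a G]$, by the elementary inequality $|\sum_a g(a)(q(a)-p(a))|\le(\max g-\min g)\,\TV(q,p)$.

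The key step, and the expected obstacle, is bounding the oscillation $\max_a G-\min_a G$ by $2B_R(T-s+1)$. I will write $G=\sum_{t=s}^T\E[f_0(X_t,A_t^*)-f_0(X_t,A_t)\mid h_{s-1},x,a]$.

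For each $t$, the per-round regret satisfies
\[
f_0(X_t,A_t^*)-f_0(X_t,A_t)\le \max_{a'}|f_0(X_t,A_t^*)-f_0(X_t,a')|.
\]
The difficulty is that the hypothesis controls the posterior mean of $|f_0(x,a)-f_0(x,a')|$ only for fixed $x,a,a'$, not of a maximum over arms. I will handle this through conditioning. For each $t\ge s$, first condition on $\cH_{t-1}$ and $X_t$. Under $\Pi$-TS, $A_t$ is a posterior sample of the argmax, so $A_t$ and $A_t^*$ have the same conditional law given $(\cH_{t-1},X_t)$. The conditional expected regret is then $\E[f_0(X_t,A_t^*)-f_0(X_t,A_t)\mid\cH_{t-1},X_t]$, which lies in $[0,\ \sup_{a,a'}\E_\Pi[|f_0(X_t,a)-f_0(X_t,a')|\mid\cH_{t-1}]]$. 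If that conditional step works, each round contributes a quantity in $[0,2B_R]$. For round $t=s$ itself, where the action is the fixed $a$ rather than a posterior sample, the round-$s$ term $\E[f_0(x,A_s^*)-f_0(x,a)\mid h_{s-1}]$ is at most $2B_R$ in absolute value.

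If the max-over-arms issue resists the sampling argument, I would fall back on bounding $\E[f_0(x,A^*)-f_0(x,a)\mid h]\le\sum_{a'}$-free terms by noting that $A^*$ is itself a posterior-measurable selection, and accept a rephrasing of the hypothesis as bounding the posterior regret range.

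Given these per-round bounds, every $G(h,x,a)$ lies in $[0,2B_R(T-s+1)]$, so the oscillation is at most $2B_R(T-s+1)$. Summing over $s$ gives the stated bound.

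The measurability of the kernels, and the fact that the suprema defining $\varepsilon_t$ dominate the realized TV distances, are routine.
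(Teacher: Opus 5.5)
Your hybrid decomposition and the inequality $\bigl|\sum_a g(a)(q(a)-p(a))\bigr|\le(\max g-\min g)\,\TV(q,p)$ are sound. The gap is in the step bounding $\operatorname{osc}_a G$.

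You try to place $G$ in $[0,2B_R(T-s+1)]$ by claiming that each round's conditional expected regret, $\E[f_0(X_t,A_t^*)-f_0(X_t,A_t)\mid\cH_{t-1},X_t]$, is at most $\sup_{a,a'}\E_\Pi[|f_0(X_t,a)-f_0(X_t,a')|\mid\cH_{t-1}]$. You make the same claim for $\E[f_0(x,A_s^*)-f_0(x,a)\mid h_{s-1}]$ at round $s$.

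This is false for $K\ge 4$. Suppose that under the posterior exactly one uniformly chosen arm has mean $M$ and the others have mean $0$. Every distinct pair then has posterior gap $2M/K$. But both the Thompson-sampling regret and $\E[f_0(x,A^*)-f_0(x,a)]$ equal $M(K-1)/K$.

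The hypothesis only controls posterior gaps between two arms that are fixed given the conditioning information. It does not control $\E[\max_{a}f_0(x,a)-f_0(x,a')\mid\cdot]$, which can be larger by a factor growing with $K$. Since you invoke the bound at the current history $\cH_{t-1}$, the uniformity of $B_R$ over histories does not rescue the step. Your fallback of rephrasing the hypothesis would prove a different theorem.

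The repair is to bound only differences in which $A^*$ cancels:
\begin{itemize}
\item Realize the two continuations (play $a$, resp.\ $a'$, at round $s$, then $\Pi$-TS) on the same $f_0\sim\Pi(\cdot\mid h_{s-1})$ and the same future contexts.
\item Then $G(h_{s-1},x,a)-G(h_{s-1},x,a')=\E\bigl[\sum_{t=s}^T\bigl(f_0(X_t,A'_t)-f_0(X_t,A_t)\bigr)\bigr]$.
\item Condition each term on the combined history of both continuations, $X_t$, and the two actions; the hypothesis then bounds it by $2B_R$ in absolute value.
\end{itemize}
This gives $\operatorname{osc}_a G\le 2B_R(T-s+1)$ and completes your telescoping.

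This is exactly how the paper uses the hypothesis. It couples PFN-TS and $\Pi$-TS by maximal couplings while their histories agree, and bounds the first-disagreement time by $\Pr(\tau\le t)\le\sum_{s\le t}\varepsilon_s$. On $\{\tau\le t\}$ it bounds the per-round difference $f_0(X_t,A_t^{\Pi\text{-}\mathrm{TS}})-f_0(X_t,A_t^{\mathrm{PFN}})$ by $2B_R$ via the combined coupled history.

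Once repaired, your hybrid argument is a valid alternative. It avoids constructing a single coupling across all rounds, at the cost of a second coupling inside each oscillation bound.
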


The theorem is a coupling statement. If PFN-TS and exact $\Pi$-TS have the same history at
round $t$, a maximal coupling makes their actions differ with probability at most
$\varepsilon_t$. Once their actions differ, their future histories may differ, which
produces the factor $T-t+1$. The first term is the regret of the ideal Bayesian algorithm
for the prior approximated by TabICL; the second term measures the cost of using the
PFN-TS sampler instead of exact posterior sampling.

\begin{remark}[Bounding the exact Bayesian term]
The regret of exact $\Pi$-TS can be bounded using the standard information-ratio
framework. Under disjoint encoding and a product prior, the factorization argument of
\citet{deng2026bfts} gives
\[
  \E[\Regret_T^{\Pi\text{-}\mathrm{TS}}]
  \leq
  \sqrt{2T\,\bar{\Gamma}_T(\Pi)\,\gamma_T(\Pi)}.
\]
Here $\bar{\Gamma}_T(\Pi)$ is the worst-case information ratio of exact $\Pi$-TS, and
$\gamma_T(\Pi)$ is the corresponding disjoint maximum information gain. If posterior
reward distributions under $\Pi$ are uniformly sub-Gaussian with proxy $\sigma_R^2$, then
finite-action Thompson Sampling bounds suggest $\bar{\Gamma}_T(\Pi)\lesssim K\sigma_R^2$
\citep{gouverneur2023thompson}. The term $\gamma_T(\Pi)$ depends on the complexity of the
prior $\Pi$; \Cref{app:proof_main} gives the formal definition and comparison rates for
linear and H\"older classes.
\end{remark}

\begin{remark}[Bounding the sampler error]
\Cref{prop:sampler_error} bounds $\varepsilon_t$ by a Gaussian approximation error for exact
posterior sampling under $\Pi$ plus a KL-type discrepancy between the Gaussian from exact-PPD SubCLT and that constructed from applying SubCLT to the PFN's PPD approximations. The latter term depends on predictive mean error
and the induced variance ratio, so the coupling error is controlled by approximation
properties of the PFN.
\end{remark}

\section{Numerical Experiments}
\label{sec:experiments}

Across synthetic and real-data benchmarks, PFN-TS improves over standard baselines when
reward functions are nonlinear or heterogeneous, while remaining competitive on problems
where simpler inductive biases are well matched to the data. \Cref{fig:main_regret} shows
selected regret curves; full trajectories are in \Cref{app:full_synthetic,app:full_openml}.

\begin{figure}[H]
  \centering
  \begin{subfigure}[t]{0.24\textwidth}
    \includegraphics[width=\textwidth]{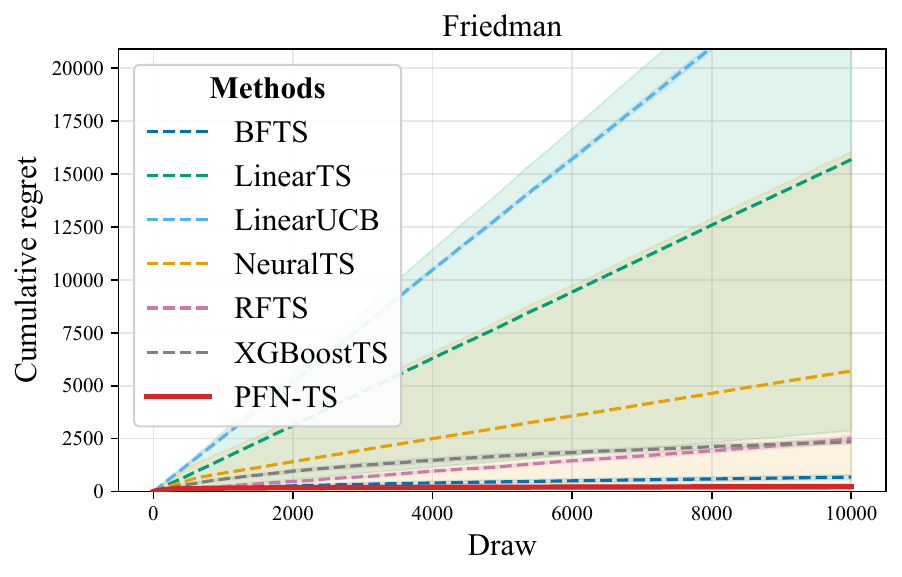}
    \caption{Friedman}
  \end{subfigure}\hfill
  \begin{subfigure}[t]{0.24\textwidth}
    \includegraphics[width=\textwidth]{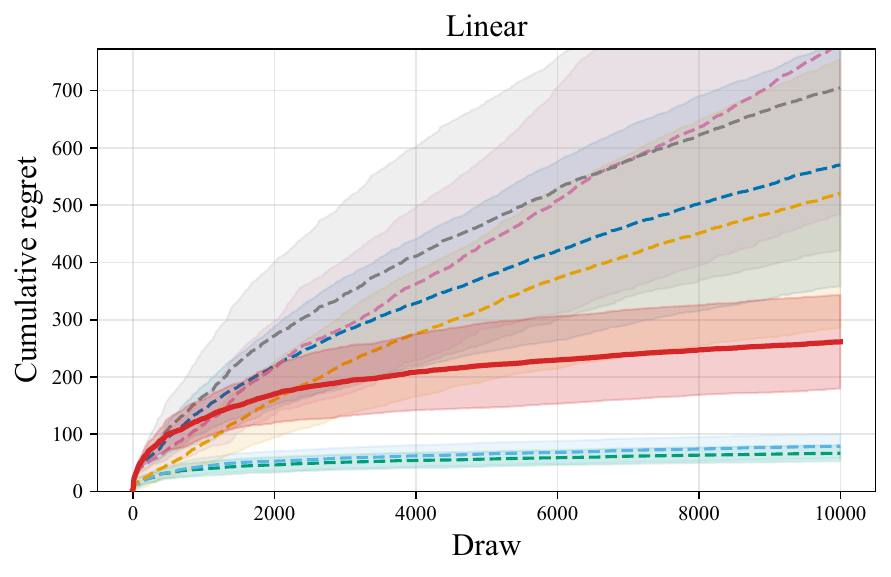}
    \caption{Linear}
  \end{subfigure}\hfill
  \begin{subfigure}[t]{0.24\textwidth}
    \includegraphics[width=\textwidth]{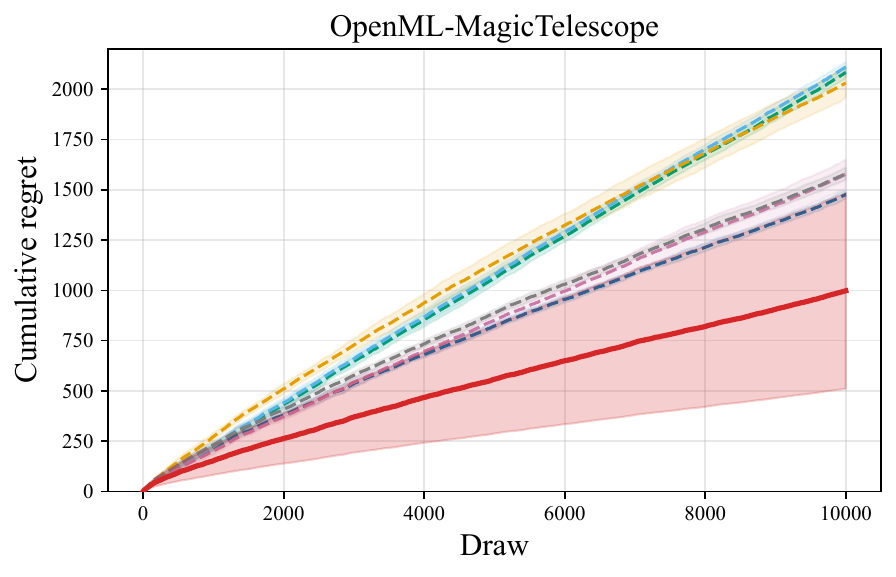}
    \caption{MagicTelescope}
  \end{subfigure}\hfill
  \begin{subfigure}[t]{0.24\textwidth}
    \includegraphics[width=\textwidth]{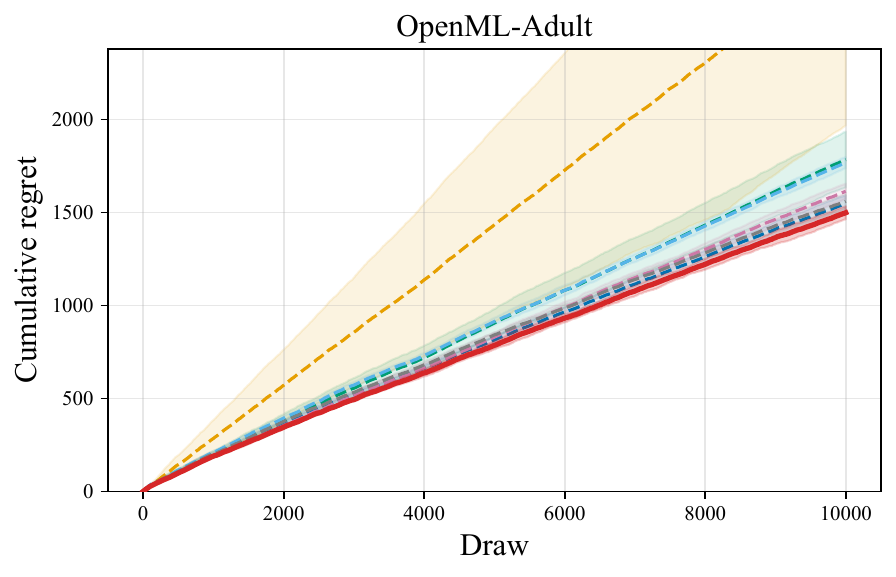}
    \caption{Adult}
  \end{subfigure}
  \caption{Selected cumulative regret trajectories (mean $\pm$ SD, $R = 5$ replications,
  $T = 10{,}000$). Left to right: Friedman (synthetic), Linear (synthetic),
  MagicTelescope (OpenML), Adult (OpenML). Full results are in
  \Cref{app:full_synthetic,app:full_openml}.}
  \label{fig:main_regret}
\end{figure}

\subsection{Experimental Setup}
\label{sec:setup_exp}

We compare PFN-TS with \textbf{LinTS} (linear Thompson Sampling), \textbf{LinUCB}
\citep{li2010contextual}, \textbf{NeuralTS} \citep{zhang2021neural}, \textbf{BFTS}
\citep{deng2026bfts}, \textbf{RFTS}, and \textbf{XGBoostTS} \citep{nilsson2024tree}. The
baseline implementations and fixed settings follow \citet{deng2026bfts}, with the
method-specific choices listed in \Cref{app:hyperparams}. Synthetic and OpenML
experiments use horizon $T = 10{,}000$ and $R = 5$ independent replications. PFN-TS uses
geometric base $b = 2$, no context-window truncation, and a warm-up of $\tau = 5$
round-robin pulls per arm; BFTS uses the same warm-up convention. LinTS and NeuralTS are
tuned on a separate pilot run, while the other baselines use fixed settings.

\subsection{Synthetic Benchmarks}
\label{sec:synthetic}

We evaluate on three families of data-generating processes (DGPs): (i)~\textbf{Linear}
rewards with Gaussian noise, (ii)~\textbf{Friedman} nonlinear rewards with varying
sparsity, correlation structure, and heteroscedastic noise, and (iii)~\textbf{SynBART},
where each arm's reward function is sampled from a BART prior, providing a
correctly-specified benchmark for BFTS. Full DGP definitions are in \Cref{app:dgp}.

\Cref{tab:synthetic_final_regret} reports final cumulative regret across all scenarios.
PFN-TS has the best average rank overall (1.38) and achieves the lowest regret on all six
Friedman variants, often by a large margin over tree and neural baselines. The main
exceptions align with model specification: LinTS is best on the Linear benchmark, while
BFTS narrowly edges PFN-TS on SynBART, where the reward functions are drawn from the BART
prior used by BFTS. The selected trajectories in \Cref{fig:main_regret} illustrate this
pattern: PFN-TS separates quickly on nonlinear Friedman rewards, while LinTS remains the
right inductive bias on the linear task.

\begin{table}[t]
  \caption{Final cumulative regret on synthetic benchmarks (mean $\pm$ SE, $T = 10{,}000$,
  $R = 5$ replications). \textbf{Bold}: lowest mean per scenario; avg rank computed across
  all scenarios.}
  \label{tab:synthetic_final_regret}
  \centering
\resizebox{\textwidth}{!}{
\begin{tabular}{lccccccccc}
\toprule
Method & Friedman & Friedman-Hetero. & Friedman-Sparse & Friedman-Sparse-Dis. & Friedman2 & Friedman3 & Linear & SynBART & Rank \\ \midrule
\textbf{PFN-TS (ours)} & \textbf{233.0} $\pm$ \textbf{94.1} & \textbf{243.2} $\pm$ \textbf{67.4} & \textbf{306.3} $\pm$ \textbf{78.5} & \textbf{568.6} $\pm$ \textbf{57.5} & \textbf{90.2} $\pm$ \textbf{21.3} & \textbf{136.4} $\pm$ \textbf{14.6} & 261.4 $\pm$ 81.5 & 51.2 $\pm$ 31.0 & 1.375\\
BFTS & 680.3 $\pm$ 98.5 & 777.1 $\pm$ 126.6 & 835.5 $\pm$ 91.6 & 684.4 $\pm$ 66.1 & 367.1 $\pm$ 74.3 & 559.3 $\pm$ 67.8 & 570.4 $\pm$ 212.5 & \textbf{50.9} $\pm$ \textbf{20.0} & 2.375 \\ 
LinearTS & 15690.6 $\pm$ 12812.3 & 26214.7 $\pm$ 149.0 & 10427.3 $\pm$ 12770.9 & 6529.8 $\pm$ 1210.7 & 524.8 $\pm$ 168.8 & 5260.7 $\pm$ 10521.5 & \textbf{66.9} $\pm$ \textbf{14.1} & 301.1 $\pm$ 76.0 & 5.125 \\ 
LinearUCB & 26222.5 $\pm$ 215.8 & 26214.7 $\pm$ 149.0 & 23106.6 $\pm$ 6305.4 & 7743.7 $\pm$ 2746.3 & 682.1 $\pm$ 120.2 & 26304.4 $\pm$ 244.2 & 79.2 $\pm$ 21.2 & 274.0 $\pm$ 113.1 & 6.000 \\ 
NeuralTS & 5697.3 $\pm$ 10349.9 & 20311.5 $\pm$ 10242.4 & 9673.3 $\pm$ 9809.5 & 1376.3 $\pm$ 151.2 & 148.2 $\pm$ 114.0 & 728.6 $\pm$ 905.3 & 520.7 $\pm$ 234.5 & 198.2 $\pm$ 73.4 & 4.000 \\ 
RFTS & 2504.5 $\pm$ 139.0 & 2562.1 $\pm$ 218.8 & 2688.8 $\pm$ 259.4 & 7579.2 $\pm$ 218.9 & 1058.4 $\pm$ 85.2 & 909.6 $\pm$ 123.5 & 781.7 $\pm$ 297.2 & 106.7 $\pm$ 47.7 & 4.875 \\ 
XGBoostTS & 2354.4 $\pm$ 115.4 & 2432.1 $\pm$ 119.8 & 2694.1 $\pm$ 82.4 & 4962.3 $\pm$ 133.1 & 780.9 $\pm$ 33.5 & 1738.6 $\pm$ 49.3 & 704.9 $\pm$ 283.6 & 66.7 $\pm$ 28.3 & 4.250 \\
\bottomrule
\end{tabular}}
\end{table}

\textbf{SubCLT uncertainty diagnostics.}
To isolate uncertainty quality from adaptive data collection, we also evaluate SubCLT in
offline regression experiments. For Linear and Friedman DGPs, we fit on
$n \in \{16,64,256,1024\}$ observations, construct nominal 95\% intervals for the latent
mean at held-out queries, and plot empirical coverage against mean interval length. BART
serves as an MCMC posterior baseline. \Cref{fig:clt_frontier} shows a mixed but useful
diagnostic: TabICL-SubCLT undercovers on the Linear DGP, but on the nonlinear Friedman DGP
it attains near-nominal coverage with substantially shorter intervals than BART. Details
are in \Cref{app:calibration}.

\begin{figure}[t]
  \vspace{-1.0em}
  \centering
  \begin{subfigure}[t]{0.40\textwidth}
    \includegraphics[width=\textwidth]{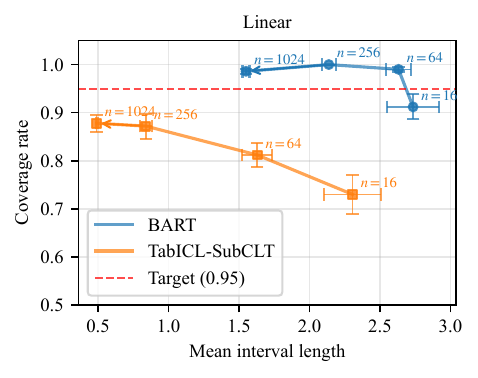}
  \end{subfigure}\hspace{0.04\textwidth}
  \begin{subfigure}[t]{0.40\textwidth}
    \includegraphics[width=\textwidth]{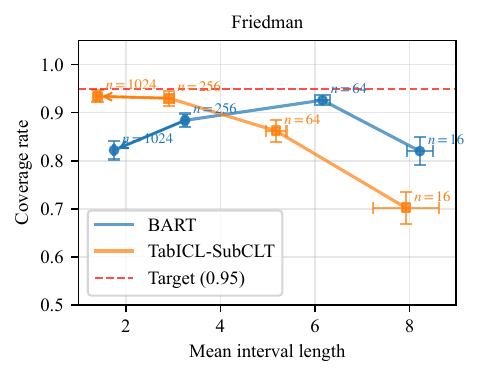}
  \end{subfigure}
  \caption{Coverage--length diagnostics for nominal 95\% intervals. Each marker is one
  sample size; lower interval length at fixed coverage is better.}
  \label{fig:clt_frontier}
  \vspace{-1.25em}
\end{figure}

\subsection{OpenML Benchmarks}
\label{sec:openml}

We transform eight OpenML \citep{vanschoren_openml_2014} classification tasks into contextual
bandit problems following standard protocol \citep{riquelme2018deep,zhang2021neural,nilsson2024tree,deng2026bfts}:
each class label becomes an arm, and the reward is 1 for choosing the correct class and 0
otherwise. We use Adult, Covertype, EEGEyeState, GasDrift, MagicTelescope, Mushroom,
PageBlocks, and Shuttle. 

\Cref{tab:openml_final_regret} reports final cumulative regret and average rank across all
eight datasets. PFN-TS again has the best average rank (1.88) and obtains the lowest final
regret on six of the eight tasks: Adult, Covertype, EEGEyeState, GasDrift,
MagicTelescope, and Mushroom. BFTS performs best on PageBlocks and Shuttle.
Some representative OpenML trajectories are shown in \Cref{fig:main_regret}, with full
per-dataset results in \Cref{app:full_openml}.
We note that the only OpenML dataset where PFN-TS is far from the best is Shuttle, which has a large number of arms (7) and a highly imbalanced reward structure.
PFN-TS is less stable on this dataset, which we believe can be mitigated by further tuning of the encoding strategy; we leave this to future work.

\begin{table}[t]
  \caption{Final cumulative regret on OpenML benchmarks (mean $\pm$ SE). The rightmost
  column reports average rank across all eight datasets; lower is better.}
  \label{tab:openml_final_regret}
  \centering
\resizebox{\textwidth}{!}{
\begin{tabular}{lccccccccc}
\toprule
Method & Adult & Covertype & EEGEyeState & GasDrift & MagicTelescope & Mushroom & PageBlocks & Shuttle & Rank \\
\midrule
\textbf{PFN-TS (ours)} & \textbf{1499.2} $\pm$ \textbf{35.0} & \textbf{3441.0} $\pm$ \textbf{457.7} & \textbf{625.6} $\pm$ \textbf{12.7} & \textbf{350.8} $\pm$ \textbf{32.3} & \textbf{998.0} $\pm$ \textbf{486.6} & \textbf{37.0} $\pm$ \textbf{4.8} & 295.6 $\pm$ 16.2 & 681.4 $\pm$ 572.3 & 1.875 \\
BFTS & 1548.8 $\pm$ 44.1 & 3642.0 $\pm$ 36.2 & 2270.6 $\pm$ 37.6 & 878.8 $\pm$ 59.6 & 1476.6 $\pm$ 20.1 & 56.0 $\pm$ 9.0 & \textbf{272.0} $\pm$ \textbf{17.1} & \textbf{108.2} $\pm$ \textbf{4.9} & 2.375 \\
LinearTS & 1785.0 $\pm$ 150.4 & 3457.8 $\pm$ 99.2 & 3649.2 $\pm$ 50.8 & 489.2 $\pm$ 11.1 & 2083.0 $\pm$ 27.2 & 493.8 $\pm$ 93.1 & 351.4 $\pm$ 14.9 & 639.6 $\pm$ 19.6 & 4.750 \\
LinearUCB & 1768.2 $\pm$ 30.8 & 3705.2 $\pm$ 52.2 & 3622.8 $\pm$ 10.3 & 930.8 $\pm$ 23.2 & 2110.0 $\pm$ 23.6 & 401.4 $\pm$ 3.4 & 370.4 $\pm$ 16.5 & 767.8 $\pm$ 43.1 & 5.625 \\
NeuralTS & 2984.8 $\pm$ 1017.6 & 5014.8 $\pm$ 1268.8 & 4977.6 $\pm$ 38.7 & 7662.6 $\pm$ 182.1 & 2031.0 $\pm$ 74.5 & 1711.8 $\pm$ 1924.3 & 412.4 $\pm$ 24.2 & 429.0 $\pm$ 332.5 & 6.375 \\
RFTS & 1614.4 $\pm$ 41.7 & 3601.8 $\pm$ 57.3 & 2666.0 $\pm$ 66.7 & 2119.4 $\pm$ 356.5 & 1577.8 $\pm$ 71.0 & 69.6 $\pm$ 26.6 & 292.6 $\pm$ 19.4 & 176.6 $\pm$ 42.6 & 3.500 \\
XGBoostTS & 1560.2 $\pm$ 40.1 & 3535.2 $\pm$ 82.8 & 2113.2 $\pm$ 36.1 & 1215.4 $\pm$ 32.2 & 1579.2 $\pm$ 29.7 & 81.4 $\pm$ 6.4 & 306.2 $\pm$ 12.4 & 182.6 $\pm$ 93.2 & 3.500 \\
\bottomrule
\end{tabular}}

\end{table}

\subsection{Off-policy Evaluation: Drink Less mHealth Trial}
\label{sec:ope}

\begin{wrapfigure}{r}{0.46\textwidth}
  \vspace{-1.25em}
  \centering
  \includegraphics[width=0.45\textwidth]{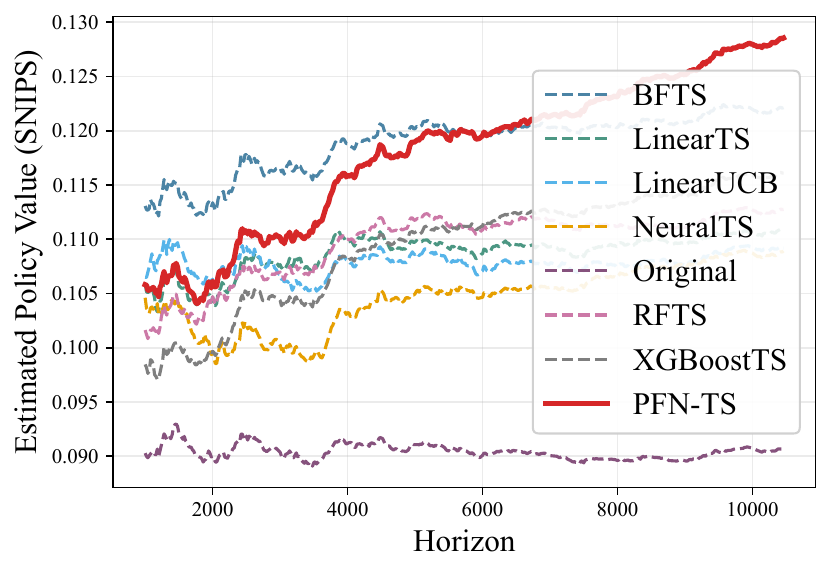}
  \caption{Estimated policy value (SNIPS) on the Drink Less trial. Higher is better.}
  \label{fig:ope}
  \vspace{-1em}
\end{wrapfigure}

To assess performance on real-world logged data, we apply PFN-TS to the Drink Less
micro-randomized trial \citep{bell2020drinkless}, a 30-day study in which $n = 349$
participants at risk of hazardous drinking were randomized daily among three push-notification
actions (no message, standard message, tailored message) with static propensities
$(0.4, 0.3, 0.3)$. The reward is a binary proximal engagement indicator (app opening within
the following hour). Following \citet{deng2026bfts}, we unfold the panel into
$T = n \times 30 = 10{,}470$ sequential decision points, preserving within-participant
temporal order, and simulate each algorithm's policy in offline replay. Policy value is
estimated using self-normalized importance sampling
(SNIPS;~\citet{swaminathan2015self}), with a doubly-robust (DR) estimator as a robustness
check \citep{dudik2011doubly}. \Cref{fig:ope} shows that PFN-TS has the highest estimated
SNIPS value by the final horizon, but performs worse than BFTS on shorter horizons. 
OPE diagnostics (importance weight distributions and DR
estimates) are in \Cref{app:ope_diagnostics}.

\subsection{Ablation Studies}
\label{sec:ablation}

We examine three design choices of PFN-TS; full ablation plots are in
\Cref{app:ablations}.

\begin{figure}[t]
  \centering
  \begin{subfigure}[t]{0.24\textwidth}
    \includegraphics[width=\textwidth]{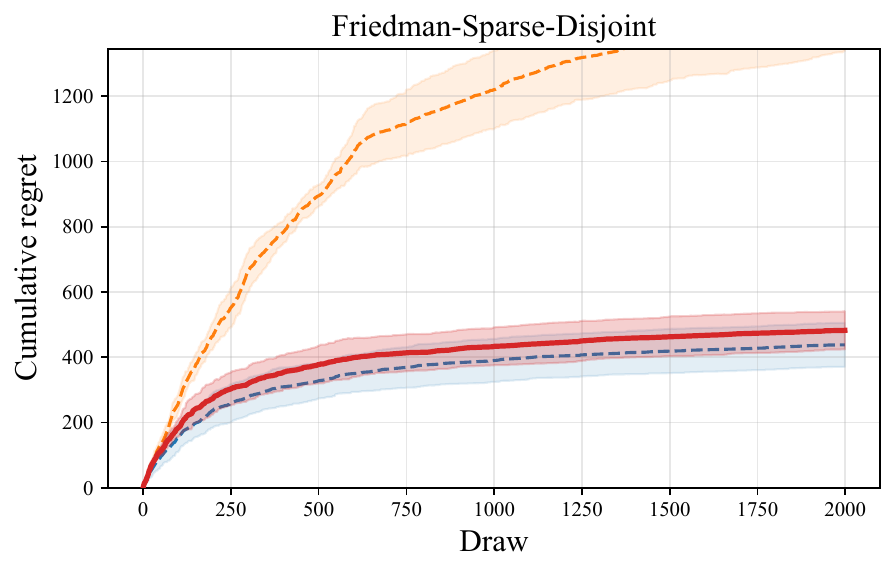}
    \caption{Sparse-disjoint}
  \end{subfigure}\hfill
  \begin{subfigure}[t]{0.24\textwidth}
    \includegraphics[width=\textwidth]{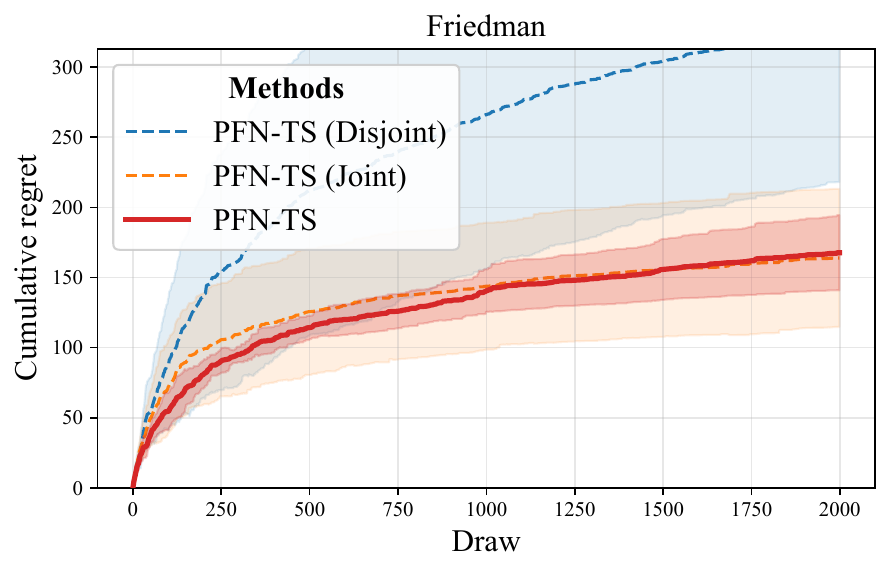}
    \caption{Friedman}
  \end{subfigure}\hfill
  \begin{subfigure}[t]{0.24\textwidth}
    \includegraphics[width=\textwidth]{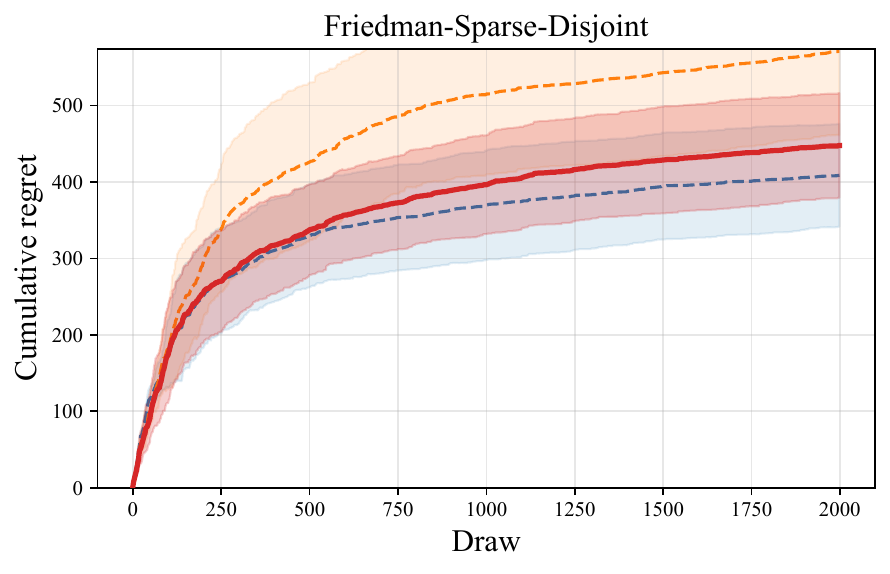}
    \caption{Sparse-disjoint}
  \end{subfigure}\hfill
  \begin{subfigure}[t]{0.24\textwidth}
    \includegraphics[width=\textwidth]{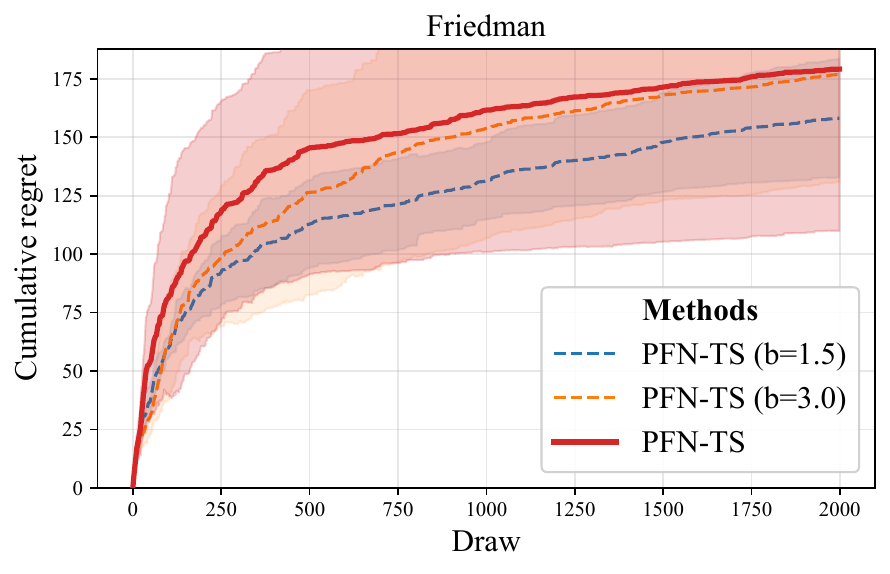}
    \caption{Friedman}
  \end{subfigure}
  \caption{Selected ablations on two representative nonlinear DGPs (mean cumulative
  regret, $R = 5$ replications). The first two panels vary the arm encoding, comparing
  the adaptive rule with fixed disjoint and joint encodings; the last two panels vary the
  geometric subsampling base $b \in \{1.5,2,3\}$.}
  \label{fig:main_ablations}
\end{figure}

\textbf{Encoding strategy}.
We compare the adaptive encoding rule (\Cref{sec:method}) against two fixed alternatives:
\textit{Disjoint} (separate context per arm, no arm indicator) and \textit{Joint} (arm
index appended to context as a categorical feature). 
Cumulative regret is sensitive to this
choice: disjoint encoding is poor when arms share structure, while joint encoding can be
less effective when arm-specific histories should remain separate. \Cref{fig:main_ablations} (left) shows that the adaptive rule is competitive with the better fixed encoding on both a shared-structure Friedman benchmark and a sparse disjoint-arm variant.

\textbf{Subsampling grid}.
We vary the geometric base $b \in \{1.5, 2, 3\}$. 
\Cref{fig:main_ablations} (right) shows that smaller bases can modestly improve regret
on some scenarios by using a denser prefix grid, but they require more forward passes. 
The default $b = 2$ gives a stable middle ground, with regret curves generally of the same
order as the neighboring choices while using half as many CLT forward passes as $b = 1.5$.

\textbf{Decision rule}.
We compare Thompson sampling (the default) against \textit{PFN-PS} (sampling from the
posterior predictive distribution) and \textit{PFN-Greedy} (selecting the arm with the
highest predictive mean). PFN-PS performs substantially worse, confirming that sampling
noisy future rewards from the PPD is not a substitute for sampling latent mean rewards.
Greedy selection can be competitive when the reward landscape is identified quickly, but
PFN-TS retains an edge on harder tasks.

\section{Discussion}
\label{sec:discussion}

PFN-TS shows that tabular in-context learning can serve as an effective reward model for
sequential decision-making once efficient uncertainty extraction and adaptive encoding are
in place.

\textbf{Computation.}
Runtime is the main current limitation: SubCLT reduces uncertainty-estimation cost, but
each decision still requires repeated TabICL evaluations across arms and grid points.
Preliminary experiments suggest that reducing TabICL's ensemble size from
$n_{\mathrm{estimators}}=8$ to $1$ substantially improves wall-clock time without
noticeably changing regret, and low GPU utilization suggests room for batching,
data-loading, and inference optimizations.

\textbf{Finite-sample calibration.}
SubCLT is asymptotic and can be poorly calibrated when arm histories are short or weakly
informative, as in the small-$n$ coverage diagnostics, early Drink Less horizons, and
Shuttle, where most arms observe zero rewards for many rounds. Regularized variants, such
as shrinkage toward pooled variances, adaptive variance floors, or prior-informed
pseudo-observations, are a natural next step.

\textbf{Extensions.}
Posterior contraction or information-gain bounds for the TabPFN/TabICL prior would sharpen
our Bayesian regret guarantee and could imply frequentist regret bounds. The adaptive
encoding rule is empirical, large or infinite action spaces would require candidate
generation or action search, and SubCLT-style uncertainty estimates may also be useful in
active learning, adaptive experimentation, and online model selection with tabular
foundation models.

\textbf{Deployment.}
PFN-TS should not be deployed in high-stakes adaptive decision systems without
application-specific validation. In settings such as mobile health, posterior uncertainty
can improve exploration, but policies should be monitored for subgroup performance, safety
constraints, and feedback loops induced by adaptive data collection.

\begin{ack}
QZ is supported by the National Key R\&D Program of China Grant 2024YFA1015800.
BC is supported by MOE Tier 2 grant MOE-T2EP20122-0013.
YT is supported by NUS Start-up Grant A-8000448-00-00 and MOE AcRF Tier 1 Grant A-8002498-00-00.
Additionally, BC and YT are supported by MOE AcRF Tier 1 Grant A-8004458-00-00.
\end{ack}

{\small
\bibliographystyle{plainnat}
\bibliography{references}
}

\appendix
\section{Proofs}
\label{app:proofs}

\subsection{Proof of \Cref{thm:clt}}
\label{app:proof_clt}

\begin{proof}
Write $\delta_i = m_i(x) - m_{i-1}(x)$ and decompose each increment as
\[
  \delta_i = \zeta_i + \xi_i,
\]
where $\zeta_i = \delta_i - \E[\delta_i \mid \cF_{i-1}]$ is the martingale-difference part
and $\xi_i = \E[\delta_i \mid \cF_{i-1}]$ is the conditional-mean (bias) part.

\medskip
\noindent\textbf{Part 1: CLT for $\sqrt{n}(m_\infty(x) - m_n(x))$.}

We apply the quasi-martingale CLT of \citet{berti2011central} (Proposition~1, restated as
Theorem~G.4 in \citet{fortini2026uncertainty}) to the scalar sequence $(m_n(x))$.
We verify its four requirements.

\emph{(UI) Uniform integrability of $(m_n(x))$.}
By conditional Jensen, $\E[\delta_i^2 \mid \cF_{i-1}]^2 \leq \E[\delta_i^4 \mid \cF_{i-1}]$,
so $\E\bigl[(i^2\,\E[\delta_i^2 \mid \cF_{i-1}])^2\bigr] \leq i^4\,\E[\delta_i^4] \leq C$
by \Cref{assump:moments}. Hence $\{i^2\,\E[\delta_i^2 \mid \cF_{i-1}]\}$ is $L^2$-bounded
and therefore uniformly integrable, which allows taking expectations through the a.s.\ limit
in \Cref{assump:covariance}: $\E[\delta_i^2] \leq (V(x)+1)/i^2$ for all large $i$, and
$\sum_{i\geq 1}\E[\delta_i^2] < \infty$. The martingale part $M_n = \sum_{i=1}^n \zeta_i$
is therefore $L^2$-bounded. The bias part $B_n = \sum_{i=1}^n \xi_i$ converges
absolutely in $L^2$ because $\sum_i\|\xi_i\|_2 \leq
\sum_i\sqrt{i}\,\|\xi_i\|_2 < \infty$ by \Cref{assump:quasi_martingale}. Hence
$(m_n(x))$ is $L^2$-bounded and uniformly integrable, and
$m_\infty(x) = m_0(x) + \sum_{i=1}^\infty \delta_i$ exists a.s.

\emph{(i) Quasi-martingale condition with $\sqrt{i}$ weights.}
Theorem~G.4 requires $\sum_{i\geq 1}\sqrt{i}\,\E[|\xi_i|] < \infty$.
Since $\E[|\xi_i|] \leq \|\xi_i\|_2$ by Jensen's inequality, this follows immediately from
\Cref{assump:quasi_martingale}.

\emph{(ii) Supremum bound.}
Since $\bigl(\sup_{i\geq 1}\sqrt{i}\,|\delta_i|\bigr)^4 \leq \sum_{i=1}^\infty i^2\delta_i^4$,
taking expectations and applying \Cref{assump:moments}:
\[
  \E\!\left[\Bigl(\sup_{i\geq 1}\sqrt{i}\,|\delta_i|\Bigr)^{\!4}\right]
  \leq \sum_{i=1}^\infty i^2\,\E[\delta_i^4]
  \leq C\sum_{i=1}^\infty i^{-2} < \infty,
\]
so $\sup_i \sqrt{i}\,|\delta_i| \in L^4 \subset L^1$, verifying condition~(ii).

\emph{(iii) A.s.\ variance convergence.}
Apply \citet{fortini2026uncertainty}, Lemma~G.5 with $W_i = i^2\delta_i^2$. The two
lemma conditions are:
(a)~$\sum_{i\geq 1}i^{-2}\E[W_i^2] = \sum_i i^2\,\E[\delta_i^4] \leq C\sum_i i^{-2}
< \infty$ by \Cref{assump:moments};
(b)~$\E[W_{i+1}\mid\cF_i] = (i{+}1)^2\,\E[\delta_{i+1}^2\mid\cF_i] \to V(x)$ a.s.\
by \Cref{assump:covariance}.
The lemma then gives $n\sum_{i\geq n} W_i / i^2
= n\sum_{i\geq n}\delta_i^2 \to V(x)$ a.s., which is condition~(iii).

All four requirements of Theorem~G.4 are satisfied, giving
$\cL(\sqrt{n}(m_\infty(x) - m_n(x)) \mid \cD_{1:n}) \xrightarrow{w} \cN(0,V(x))$,
$\mathbb{P}$-a.s.

\medskip
\noindent\textbf{Part 2: Consistency of $\hat{V}_{s_n}(x)$.}

Write $D_j = M_j + R_j$ where $M_j = \sum_{i=t_{j-1}+1}^{t_j} \zeta_i$ and
$R_j = \sum_{i=t_{j-1}+1}^{t_j} \xi_i$.

\noindent\textit{Step 2a: Block mean.}
Let $I_j=\{t_{j-1}+1,\ldots,t_j\}$. On the geometric grid, for all large $j$,
$t_j/t_{j-1}\le b$ and $t_j-t_{j-1}\ge c_b t_{j-1}$ for a constant $c_b>0$. Hence
$w_j=O(t_{j-1})$ and
\[
  \sum_{i\in I_j} i^{-2}=w_j^{-1}(1+o(1)).
\]

We first identify the total predictable second moment of the block. Put
$A_i=i^2\E[\delta_i^2\mid\cF_{i-1}]$. By \Cref{assump:covariance},
$A_i\to V(x)$ a.s.; by conditional Jensen and \Cref{assump:moments}, $\{A_i^2\}$ is
uniformly integrable. Thus $A_i\to V(x)$ in $L^2$, and Toeplitz' lemma gives
\[
  w_j\sum_{i\in I_j} \frac{A_i}{i^2} \to V(x)
  \quad\text{in }L^2.
\]
Taking expectations, this gives
$w_j\sum_{i\in I_j}\E[\delta_i^2]\to V(x)$.

We now pass from the total increment $\delta_i$ to its martingale part. The decomposition
$\delta_i=\zeta_i+\xi_i$ is orthogonal in $L^2$, so
$\E[\delta_i^2]=\E[\zeta_i^2]+\E[\xi_i^2]$. Let
$T_j=\sum_{i\ge t_{j-1}}\sqrt{i}\,\|\xi_i\|_2$; by
\Cref{assump:quasi_martingale}, $T_j\to0$. Then
\[
  \|R_j\|_2
  \leq \sum_{i\in I_j}\|\xi_i\|_2
  \leq \frac{T_j}{\sqrt{t_{j-1}}},
  \qquad
  w_j\sum_{i\in I_j}\E[\xi_i^2]
  \leq \frac{w_j}{t_{j-1}}T_j^2=o(1).
\]
Consequently
\[
  w_j\E[M_j^2]
  = w_j\sum_{i\in I_j}\E[\zeta_i^2]
  = w_j\sum_{i\in I_j}\E[\delta_i^2] - w_j\sum_{i\in I_j}\E[\xi_i^2]
  \to V(x),
\]
and $\|M_j\|_2=O(w_j^{-1/2})$. Finally, expanding
$D_j^2=(M_j+R_j)^2$ shows that the remaining drift terms vanish:
\[
  2w_j\E|M_jR_j|
  \leq 2w_j\|M_j\|_2\|R_j\|_2
  = O(T_j)\to0,
\]
and
$w_j\E[(R_j)^2]\leq (w_j/t_{j-1})T_j^2=o(1)$. Therefore
$w_j\E[D_j^2]\to V(x)$.

\noindent\textit{Step 2b: Second-moment bound for a single block.}
By the martingale Burkholder--Rosenthal inequality,
\[
  \E[M_j^4] \leq C_R\!\left(
    \E\!\left[\left(\sum_{i=t_{j-1}+1}^{t_j}
      \E[\zeta_i^2\mid\cF_{i-1}]\right)^{\!2}\right]
    + \sum_{i=t_{j-1}+1}^{t_j}\!\E[\zeta_i^4]\right)
\]
The predictable-variation term is $O(t_{j-1}^{-2})$ because
$\E[\zeta_i^2\mid\cF_{i-1}]\leq \E[\delta_i^2\mid\cF_{i-1}]$ and the weighted block
averages above are $L^2$-bounded. The fourth-moment term is
$O(t_{j-1}^{-3})$ by \Cref{assump:moments}. Therefore
$\E[M_j^4]=O(t_{j-1}^{-2})$. Since $w_j = O(t_{j-1})$, this gives
$\E[(w_j M_j^2)^2] = O(1)$. Individual blocks do not concentrate to $V(x)$; averaging
is essential.

\noindent\textit{Step 2c: Averaging over $J$ blocks.}
We first remove the drift contribution in
$D_j^2=(M_j+R_j)^2$. Step~2a gives
$w_j\E|M_jR_j|=O(T_j)$ and $w_j\E[(R_j)^2]=O(T_j^2)$, so Ces\`aro's lemma gives
\[
  \frac{1}{J}\sum_{j=1}^J
  w_j\bigl(D_j^2-M_j^2\bigr)
  \xrightarrow{L^1}0.
\]
It remains to average $w_jM_j^2$. Let
\[
  Y_j=\E[w_jM_j^2\mid\cF_{t_{j-1}}],
  \qquad
  U_j=w_jM_j^2-Y_j.
\]
Then $\{U_j\}$ is a martingale difference sequence with respect to the block filtration
$\{\cF_{t_j}\}$. By Step~2b,
\[
  \E\!\left[\left(\frac{1}{J}\sum_{j=1}^J U_j\right)^2\right]
  = \frac{1}{J^2}\sum_{j=1}^J \E[U_j^2]
  = O(1/J)\to0.
\]

Finally, we control the predictable part. Since martingale cross terms vanish
conditionally,
\[
  Y_j
  = \E\!\left[
      w_j\sum_{i\in I_j}\E[\zeta_i^2\mid\cF_{i-1}]
      \,\middle|\,\cF_{t_{j-1}}
    \right].
\]
Using
$\E[\zeta_i^2\mid\cF_{i-1}]
=A_i/i^2-\xi_i^2$, the contraction property of conditional expectation, the
$L^2$ convergence
$w_j\sum_{i\in I_j}A_i/i^2\to V(x)$, and the bound
$w_j\sum_{i\in I_j}\E[\xi_i^2]\to0$ from Step~2a, we obtain
$Y_j\to V(x)$ in $L^1$. Hence, again by Ces\`aro's lemma,
\[
  \E\!\left[\left|\frac{1}{J}\sum_{j=1}^J (Y_j-V(x))\right|\right]\to0.
\]
Combining the drift removal, martingale-difference average, and predictable average gives
$\hat{V}_{s_n}(x)\xrightarrow{p} V(x)$.
\end{proof}

\begin{remark}[On the moment condition]
\label{rem:moment_condition}
\Cref{assump:moments} is a convenient sufficient condition rather than a tight
requirement. The CLT itself can be proved under weaker Lindeberg-type or maximal-increment
conditions, and the variance-estimator argument mainly needs enough uniform integrability
to control the weighted block squares $w_jD_j^2$. We state the stronger
$4+\epsilon$ moment condition because it gives a short proof of both requirements while
keeping the theorem aligned with the predictive-CLT assumptions used above.
\end{remark}

\subsection{Proof of \Cref{thm:main}}
\label{app:proof_main}

\begin{proof}
Construct PFN-TS and exact $\Pi$-TS on the same probability space. They share the same
draw $f_0\sim\Pi$, the same context sequence, and the same reward noise whenever they
select the same action. If their histories agree at the start of round $t$, couple their
actions by a maximal coupling of
$Q_t(\cdot\mid X_t,\cD_{t-1}^{(1:K)})$ and
$P_t^\Pi(\cdot\mid X_t,\cD_{t-1}^{(1:K)})$, where
$\cD_{t-1}^{(1:K)}$ denotes the collection of arm-specific histories. Then the
conditional probability that their actions disagree is at most $\varepsilon_t$.

Let $\tau$ be the first round in which the coupled actions disagree, with $\tau=\infty$
if this never happens. If $\tau>t$, the two algorithms have identical histories and take
the same action through round $t$. Thus
\[
  \Pr(\tau=t\mid \tau\geq t)\leq \varepsilon_t,
  \qquad
  \Pr(\tau\leq t)\leq \sum_{s=1}^t \varepsilon_s.
\]
The regret difference at round $t$ is zero on $\{\tau>t\}$. On $\{\tau\leq t\}$, condition
on the context, the two actions, and the combined arm-specific histories observed by the
coupled algorithms before the reward at round $t$. The posterior expected reward-range
condition in \Cref{thm:main} gives
\[
  \E\!\left[
    \left|f_0(X_t,A_t^{\Pi\text{-}\mathrm{TS}})
    -f_0(X_t,A_t^{\mathrm{PFN}})\right|
    \,\middle|\,
    X_t,A_t^{\Pi\text{-}\mathrm{TS}},A_t^{\mathrm{PFN}},
    \cD_{t-1}^{\mathrm{cpl}}
  \right]
  \leq 2B_R,
\]
where $\cD_{t-1}^{\mathrm{cpl}}$ denotes the combined histories generated by the coupled
process. Hence
\[
  \E[\Regret_T^{\mathrm{PFN}}]
  \leq
  \E[\Regret_T^{\Pi\text{-}\mathrm{TS}}]
  + 2B_R\sum_{t=1}^T \Pr(\tau\leq t).
\]
Using the previous union bound and exchanging the order of summation gives
\[
  \E[\Regret_T^{\mathrm{PFN}}]
  \leq
  \E[\Regret_T^{\Pi\text{-}\mathrm{TS}}]
  + 2B_R\sum_{t=1}^T\sum_{s=1}^t \varepsilon_s
  =
  \E[\Regret_T^{\Pi\text{-}\mathrm{TS}}]
  + 2B_R\sum_{s=1}^T (T-s+1)\varepsilon_s.
\]
\end{proof}

\begin{remark}[Information-gain bound for exact $\Pi$-TS]
The exact Bayesian term in \Cref{thm:main} can be bounded by applying the standard
information-ratio inequality to exact $\Pi$-TS:
\[
  \E[\Regret_T^{\Pi\text{-}\mathrm{TS}}]
  \leq
  \sqrt{2T\,\bar{\Gamma}_T(\Pi)\,I_\Pi(f_0;\cH_T^{\Pi\text{-}\mathrm{TS}})}.
\]
Under disjoint encoding and the product prior $\Pi=\bigotimes_{a=1}^K\Pi_a$, the
information acquired across arms factorizes as
\[
  I_\Pi(f_0;\cH_T)
  =
  \sum_{a=1}^K I_\Pi(f_0(\cdot,a);\cH_{T,a}).
\]
Consequently,
$I_\Pi(f_0;\cH_T^{\Pi\text{-}\mathrm{TS}})\leq \gamma_T(\Pi)$, where
\[
  \gamma_n(\Pi_a)
  =
  \sup_{x_1,\ldots,x_n\in\cX}
  I_{\Pi_a}\!\left(f_0(\cdot,a);(R_1,\ldots,R_n)\mid X_i=x_i,\;A_i=a\right)
\]
is the usual maximum information gain for arm $a$, and
\[
  \gamma_T(\Pi)
  =
  \max_{n_1+\cdots+n_K=T}
  \sum_{a=1}^K \gamma_{n_a}(\Pi_a).
\]
If the posterior reward distributions under $\Pi$ are uniformly sub-Gaussian with proxy
$\sigma_R^2$, finite-action Thompson Sampling information-ratio bounds suggest
$\bar{\Gamma}_T(\Pi)\lesssim K\sigma_R^2$ \citep{gouverneur2023thompson}. We keep
$\bar{\Gamma}_T(\Pi)$ explicit because this is a condition on Bayesian reward
distributions, not merely on observation noise.
\end{remark}

\begin{proposition}[Sampler error from PPD approximation]
\label{prop:sampler_error}
For each arm $a$, let $\cD^{(a)}$ denote the arm-specific history used by disjoint
encoding, let $s_a$ be its latest refresh size, and write
$\cD_s^{(a)}=\cD^{(a)}_{1:s_a}$. Let $m^\Pi(x;\cD_s^{(a)})$ be the exact posterior
predictive mean under $\Pi$, and let $m(x;\cD_s^{(a)})$ be the predictive mean used by
PFN-TS. Let $\hat{V}^\Pi(x;\cD_s^{(a)})$ be the SubCLT variance estimate obtained by
applying SubCLT to the exact $\Pi$-predictive mean sequence, and let
$\hat{V}(x;\cD_s^{(a)})$ be the SubCLT variance estimate from the TabICL predictive
sequence. Assume these variance estimates are strictly positive; in applications this can
be enforced by applying a common deterministic variance floor. For
$\cD^{(1:K)}=(\cD^{(1)},\ldots,\cD^{(K)})$, define
\begin{align*}
  G_t^\Pi(x,\cD^{(1:K)})
  &=
  \bigotimes_{a=1}^K
  \cN\!\left(
    m^\Pi(x;\cD_s^{(a)}),
    \frac{\hat{V}^\Pi(x;\cD_s^{(a)})}{s_a}
  \right),\\
  G_t(x,\cD^{(1:K)})
  &=
  \bigotimes_{a=1}^K
  \cN\!\left(
    m(x;\cD_s^{(a)}),
    \frac{\hat{V}(x;\cD_s^{(a)})}{s_a}
  \right).
\end{align*}
Let $S_t^\Pi(\cdot\mid x,\cD^{(1:K)})$ be the exact posterior sampling distribution over
action-value vectors under $\Pi$. Suppose PFN-TS samples an action-value vector from
$G_t(x,\cD^{(1:K)})$ at round $t$, and let
\[
  \rho_t =
  \sup_{x,\cD^{(1:K)}}\TV\!\left(
    S_t^\Pi(\cdot\mid x,\cD^{(1:K)}),
    G_t^\Pi(x,\cD^{(1:K)})
  \right)
\]
denote the uniform Gaussian approximation error for exact posterior sampling under
$\Pi$, including the error from comparing exact full-history posterior sampling with the
snapshot posterior at refresh sizes $(s_1,\ldots,s_K)$. Define
\[
  \Delta_{t,a}(x,\cD^{(a)})
  =
  m(x;\cD_s^{(a)})-m^\Pi(x;\cD_s^{(a)}),
  \qquad
  r_{t,a}(x,\cD^{(a)})
  =
  \frac{\hat{V}(x;\cD_s^{(a)})}
       {\hat{V}^\Pi(x;\cD_s^{(a)})}.
\]
Then
\[
  \begin{aligned}
  \varepsilon_t
  \leq \rho_t
  + \frac{1}{2}\sup_{x,\cD^{(1:K)}}
    \left(
      \sum_{a=1}^K
      \left[
        r_{t,a}(x,\cD^{(a)})
        +\frac{s_a\Delta_{t,a}^2(x,\cD^{(a)})}
             {\hat{V}^\Pi(x;\cD_s^{(a)})}
        -1
        -\log r_{t,a}(x,\cD^{(a)})
      \right]
    \right)^{1/2}
  .
  \end{aligned}
  \label{eq:sampler_error_bound}
\]
\end{proposition}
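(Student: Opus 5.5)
We reduce the action-level total variation to a total variation between action-value vectors, and then split it by the triangle inequality. The action chosen by either sampler is the same deterministic map applied to a sampled vector: $\argmax_a$ with a fixed tie-breaking rule. Under exact $\Pi$-TS the vector is drawn from $S_t^\Pi(\cdot\mid x,\cD^{(1:K)})$, and under PFN-TS from $G_t(x,\cD^{(1:K)})$. Total variation does not increase under a measurable map (data processing). Hence, for every fixed $(x,\cD^{(1:K)})$,
\[
  \TV\bigl(Q_t,P_t^\Pi\bigr)\le \TV\bigl(G_t,S_t^\Pi\bigr)\le \TV\bigl(S_t^\Pi,G_t^\Pi\bigr)+\TV\bigl(G_t^\Pi,G_t\bigr).
\]
The first term is at most $\rho_t$ by definition. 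Taking the supremum over $(x,\cD^{(1:K)})$ then gives $\varepsilon_t\le\rho_t+\sup\TV(G_t^\Pi,G_t)$, using that the supremum of a sum is at most the sum of the suprema.

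For the second term we use Pinsker's inequality in the form $\TV(P,Q)\le\sqrt{\mathrm{KL}(P\|Q)/2}$. We choose the KL direction so that $\hat V^\Pi$ sits in the denominator, as in the statement, which means computing $\mathrm{KL}(G_t\|G_t^\Pi)$. Both measures are products over arms, so the KL tensorizes:
\[
  \mathrm{KL}(G_t\|G_t^\Pi)=\sum_a \mathrm{KL}\Bigl(\cN(\mu_a,\sigma_a^2)\,\Big\|\,\cN(\mu_a^\Pi,(\sigma_a^\Pi)^2)\Bigr),
\]
with $\sigma_a^2=\hat V/s_a$ and $(\sigma_a^\Pi)^2=\hat V^\Pi/s_a$. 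The strict positivity assumption makes each term well defined and finite. The univariate Gaussian formula gives each term as
\[
  \tfrac12\Bigl[\sigma_a^2/(\sigma_a^\Pi)^2+(\mu_a-\mu_a^\Pi)^2/(\sigma_a^\Pi)^2-1-\log\bigl(\sigma_a^2/(\sigma_a^\Pi)^2\bigr)\Bigr].
\]
Substituting, the variance ratio is $r_{t,a}$ (the common factor $s_a$ cancels), and the mean term is $s_a\Delta_{t,a}^2/\hat V^\Pi$.

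Combining with Pinsker, $\sqrt{\tfrac12\cdot\tfrac12\sum_a[\cdots]}=\tfrac12\bigl(\sum_a[\cdots]\bigr)^{1/2}$, which is exactly the displayed bound. The supremum passes inside the square root by monotonicity.

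The only real subtlety is bookkeeping, not analysis. We must check that $G_t^\Pi$ and $G_t$ are built from the same snapshot sizes $s_a$ and the same histories, so that only means and variances differ. We must also check that tie-breaking in the $\argmax$ is identical for both samplers, so that the data-processing step applies; ties have probability zero under nondegenerate Gaussians anyway. The snapshot-versus-full-history discrepancy is already absorbed into $\rho_t$ by its definition.
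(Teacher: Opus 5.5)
Your proposal is correct and matches the paper's proof step for step: data processing through the argmax map, the triangle inequality isolating $\rho_t$, and Pinsker's inequality applied to $\operatorname{KL}(G_t\,\|\,G_t^\Pi)$, which tensorizes over arms into univariate Gaussian terms in which the common factor $s_a$ cancels. One small quibble: ties need not have probability zero under $S_t^\Pi$, which need not be Gaussian, but your requirement that both samplers use the same tie-breaking rule already covers this.
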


\begin{proof}[Proof of \Cref{prop:sampler_error}]
The action is a measurable function of the sampled action-value vector, so data
processing gives
\[
  \TV\!\left(
    Q_t(\cdot\mid x,\cD^{(1:K)}),
    P_t^\Pi(\cdot\mid x,\cD^{(1:K)})
  \right)
  \leq
  \TV\!\left(
    G_t(x,\cD^{(1:K)}),
    S_t^\Pi(\cdot\mid x,\cD^{(1:K)})
  \right).
\]
By the triangle inequality for total variation,
\[
  \TV(G_t,S_t^\Pi)
  \leq
  \TV(G_t,G_t^\Pi)
  + \TV(G_t^\Pi,S_t^\Pi),
\]
where all terms are evaluated at a fixed $(x,\cD^{(1:K)})$. The last term is bounded by
$\rho_t$. It remains to bound the TV distance between the two diagonal Gaussians. By
Pinsker's inequality and the KL divergence between diagonal Gaussians,
\[
  \begin{aligned}
  \TV(G_t,G_t^\Pi)
  &\leq
  \sqrt{\frac{1}{2}\operatorname{KL}(G_t\,\|\,G_t^\Pi)}
  \\
  &=
  \frac{1}{2}
  \left(
    \sum_{a=1}^K
      \left[
        r_{t,a}(x,\cD^{(a)})
      +\frac{s_a\Delta_{t,a}^2(x,\cD^{(a)})}
           {\hat{V}^\Pi(x;\cD_s^{(a)})}
      -1
      -\log r_{t,a}(x,\cD^{(a)})
    \right]
  \right)^{1/2}.
  \end{aligned}
\]
Taking the supremum gives \Cref{eq:sampler_error_bound}.
\end{proof}

\begin{remark}[Interpreting the residual term]
The residual $\rho_t$ is the Gaussian approximation error for exact posterior sampling
under $\Pi$. When the predictive CLT gives a Berry--Esseen rate for the exact posterior,
one expects
$\rho_t=O(C_{\mathrm{BE}}/\sqrt{s_{t,\min}})$, where
$s_{t,\min}=\min_a s_{t,a}$ is the smallest refresh size, up to finite-grid SubCLT error
computed with the exact $\Pi$-predictive sequence and any posterior-refresh error from
using snapshot rather than full histories.
\end{remark}

\begin{remark}[Variance error from predictive-mean paths]
The variance term in \Cref{prop:sampler_error} can also be controlled by approximation error
of the predictive means along the SubCLT grid. Let $t_0<\cdots<t_J$ be the grid for an
arm history $\cD^{(a)}$, and write
\[
  D_j
  =
  m(x;\cD^{(a)}_{1:t_j})
  -
  m(x;\cD^{(a)}_{1:t_{j-1}}),
  \qquad
  D_j^\Pi
  =
  m^\Pi(x;\cD^{(a)}_{1:t_j})
  -
  m^\Pi(x;\cD^{(a)}_{1:t_{j-1}}).
\]
If $\hat{V}=J^{-1}\sum_{j=1}^Jw_jD_j^2$ and
$\hat{V}^{\Pi}=J^{-1}\sum_{j=1}^Jw_j(D_j^\Pi)^2$, then
\[
  |\hat{V}-\hat{V}^{\Pi}|
  \leq
  \frac{1}{J}\sum_{j=1}^J
  w_j
  |D_j-D_j^\Pi|
  \left(|D_j|+|D_j^\Pi|\right).
\]
Moreover, with
$e_i(x;\cD^{(a)})=
|m(x;\cD^{(a)}_{1:i})-m^\Pi(x;\cD^{(a)}_{1:i})|$,
\[
  |D_j-D_j^\Pi|
  \leq
  e_{t_j}(x;\cD^{(a)})+e_{t_{j-1}}(x;\cD^{(a)}).
\]
Thus the variance component is also controlled by TabICL's approximation to the exact
PPD predictive means, but along the full SubCLT prefix path rather than only at the final
history.
\end{remark}

\begin{remark}[Interpreting the information gain]
The information gain $\gamma_T(\Pi)$ for the prior $\Pi$ associated with TabICL is not
explicitly characterized. It should be read as the Bayesian complexity of the exact
prior model that PFN-TS is approximating. In contextual bandits, familiar examples give
the following scales, suppressing constants and noise parameters. With $K$ arms and
$d$-dimensional linear rewards, the disjoint information gain is
$\gamma_T=O(Kd\log T)$ \citep{agrawal2013thompson,russo2018tutorial}. For
squared-exponential kernel or Gaussian-process models in $d$ dimensions,
$\gamma_T=O(K(\log T)^{d+1})$ \citep{krause2011contextual,chowdhury2017kernelized}. For
Mat\'ern kernels with smoothness parameter $\nu$, a common GP proxy for
Sobolev/H\"older-type smoothness, the information gain is
$\gamma_T=\tilde{O}(K T^{d/(2\nu+d)})$ \citep{vakili2021information}; heuristically, an
$\alpha$-H\"older class gives the analogous rate
$\tilde{O}(K T^{d/(2\alpha+d)})$. For BART priors, the Bayesian regret bound of
\citet{deng2026bfts} is
$O(K\sqrt{T\log T})$, which corresponds to a logarithmic effective complexity
term, of order $K\log T$ in the information-gain part of the bound, for the BART model
class \citep{chipman2010bart,deng2026bfts}.

For TabICL and TabPFN, the prior $\Pi$ is meant to cover a broad collection of tabular
data-generating mechanisms rather than a single hand-specified class. Empirical evidence
that TabPFN adapts across many simple and complex tabular regimes
\citep{zhang2025tabpfnmodelruleall} suggests that the effective information gain of
$\Pi$ should also adapt to the structure revealed by the history. Thus, although a
worst-case bound for the full prior could be loose, one expects $\gamma_T(\Pi)$ to behave
more like the information gain of a simpler local model on easy instances, leading to
regret closer to the $\sqrt{T}$ rate when the realized reward functions have low effective
complexity. Proving such an adaptive information-gain bound for PFN priors is left for
future work.
\end{remark}

\section{Algorithm details}
\label{app:algorithm}

\subsection{Subsampled CLT}
\label{app:subclt_algorithm}

\begin{algorithm}[t]
\caption{Subsampled CLT (\textsc{SubCLT})}
\label{alg:subsampled_clt}
\begin{algorithmic}[1]
\REQUIRE Model $\cM$, history $\cD = \{(x_i, r_i)\}_{i=1}^n$, query $x$, base $b > 1$;
  the grid below contains at least one block
\STATE Compute grid $G = \{2 {=} t_0 < t_1 < \cdots < t_J \le n\}$ by iterating
  $t_{j+1} = \max(t_j{+}1,\, \lfloor b\,t_j \rfloor)$ while $t_{j+1} \le n$
\STATE $m_0 \leftarrow m(x;\, \cD_{1:2})$ \hfill $\triangleright$ forward pass on first 2 observations
\FOR{$j = 1, \ldots, J$}
  \STATE $m_j \leftarrow m(x;\, \cD_{1:t_j})$
    \hfill $\triangleright$ forward pass, reuse cache from step $j{-}1$
  \STATE $D_j \leftarrow m_j - m_{j-1}$
  \STATE $w_j \leftarrow t_j \, t_{j-1} / (t_j - t_{j-1})$
\ENDFOR
\STATE $\hat{V} \leftarrow \frac{1}{J} \sum_{j=1}^J w_j D_j^2$
\RETURN $m_J,\; \hat{V},\; t_J$ \hfill $\triangleright$ latest snapshot mean, variance, and size
\end{algorithmic}
\end{algorithm}

\subsection{Full algorithm}
\label{app:full_algorithm}

\begin{algorithm}[t]
\caption{PFN-TS}
\label{alg:tabicl_ts}
\begin{algorithmic}[1]
\REQUIRE Model $\cM$, horizon $T$, arms $K$, base $b$,
  switch times $S = \{s_1 < \cdots < s_L\}$, warm-up rounds $\tau$,
  arm threshold $K_{\mathrm{thr}}$
\STATE $e^* \leftarrow \text{disjoint}$ if $K < K_{\mathrm{thr}}$, else $\text{one\text{-}hot}$;
  \; $e^\dagger \leftarrow$ the other encoding
\STATE $C \leftarrow 0$, $C^\dagger \leftarrow 0$; dual-caching $\leftarrow$ \TRUE
\FOR{$t = 1, \ldots, T$}
  \STATE Observe $X_t$
  \IF{$t \leq \tau K$}
    \STATE $A_t \leftarrow 1 + ((t-1) \bmod K)$ \hfill $\triangleright$ round-robin warm-up
  \ELSE
    \FOR{$k = 1, \ldots, K$}
      \STATE $(\bar m, \hat{V}, s_k) \leftarrow
        \textproc{SubCLT}(\cM, \cD^{(k,e^*)}, X_t, b)$
      \STATE $\tilde{r}_k \sim \cN\!\left(\bar m,\; \hat{V} / s_k\right)$
    \ENDFOR
    \STATE $A_t \leftarrow \argmax_k \tilde{r}_k$
  \ENDIF
  \STATE Observe $R_t$; append $(X_t, R_t)$ to active history $\cD^{(A_t,e^*)}$
  \IF{dual-caching}
    \STATE Also append $(X_t, R_t)$ to challenger history $\cD^{(A_t,e^\dagger)}$;
      extend challenger snapshot cache if arm grid point reached
  \ENDIF
  \IF{$t \in S$}
    \STATE Score all obs since last switch for each encoding using cached
      snapshots; add interval CRPS to $C$, $C^\dagger$ respectively
    \IF{$C^\dagger < C$}
      \STATE Swap $e^* \leftrightarrow e^\dagger$; swap all caches and histories;
        swap $C \leftrightarrow C^\dagger$
    \ENDIF
    \IF{$t = s_L$}
      \STATE dual-caching $\leftarrow$ \FALSE; discard challenger caches
    \ENDIF
  \ENDIF
\ENDFOR
\end{algorithmic}
\end{algorithm}

\subsection{Encoding selection and termination}
\label{app:early_termination}

At each switch time $s \in S$, the CRPS for all observations since the previous switch
time is computed for both encodings by replaying observations against the cached model
snapshots; this interval CRPS is added to the respective cumulative totals $C$ and
$C^\dagger$. The encoding with lower cumulative CRPS becomes the active encoding.
Dual caching terminates unconditionally at the last switch time $s_L$ (default
$s_L = 2048$), after which only the active encoding's caches are retained and the
challenger is discarded. No additional forward passes are required for CRPS scoring
since the snapshot caches built during Thompson Sampling are reused.

\subsection{CRPS computation}
\label{app:crps}

For TabICL's discretized output distribution (a probability mass function over binned
response values), CRPS is computed exactly as a sum over the bin CDF:
\[
  \CRPS(\hat{F}, r) = \sum_{j} (\hat{F}(y_j) - \mathbf{1}\{y_j \geq r\})^2 \Delta y_j,
\]
where $y_j$ are the bin midpoints and $\Delta y_j$ are the bin widths. This is computed
at zero additional forward-pass cost since $\hat{F}$ is already available from the
prediction step.

\subsection{Caching in TabICL}
\label{app:caching}

TabICL caches the key-value representations for a fixed training set $\cD_{1:t_j}$ after
computing $\mu_{t_j}(x)$. Because the grid is evaluated in increasing order, the cache
from step $j-1$ can be extended to step $j$ by appending the new observations
$\cD_{t_{j-1}+1:t_j}$ and recomputing only the affected attention layers. This reduces
the per-step cost from $O(t_j^2)$ to $O((t_j - t_{j-1}) \cdot t_j)$, a substantial
saving on the coarser geometric grid.

\section{Experimental Specifications}
\label{app:experiments}

All experiments use global random seed 42, with per-replication seeds derived from it.

\paragraph{Compute environment.}
PFN-TS used an NVIDIA RTX 5090 GPU for the synthetic experiments, ablation studies, and
coverage analysis, and an NVIDIA RTX PRO 6000 GPU with 96 GB of memory for the OpenML
experiments. 
On each dataset, wall-clock time for a $T=10,000$ horizon is a few hours.
NeuralTS \citep{zhang2021neural} used an NVIDIA A40 GPU and one 36-core CPU
socket. All other baselines ran on CPU with 4 logical cores. BFTS used 4 parallel MCMC
chains, and RFTS and XGBoostTS (TETS-RF and TETS-XGBoost in \citealp{nilsson2024tree})
used OpenMP parallelization with \texttt{nthread=4}. Because hardware and parallelization
differed across methods, we do not report detailed wall-clock runtime comparisons.

\subsection{Synthetic DGP Specifications}
\label{app:dgp}

All synthetic experiments use horizon $T = 10{,}000$ and $R = 5$ independent replications.
The interaction proceeds at each round $t = 1, \ldots, T$ as follows: a context
$X_t \in \mathbb{R}^P$ arrives, the agent selects arm $a_t \in \{1, \ldots, K\}$, and
observes reward $Y_t = \mu_{a_t}(X_t) + \varepsilon_t$ with independent noise. Unless
stated otherwise, $\varepsilon_t \sim \mathcal{N}(0, \sigma^2)$.

\paragraph{Linear ($P = 10,\ K = 3$).}
Contexts are i.i.d.\ $X_t \sim \mathrm{Unif}([0,1]^P)$. For each arm $a$, coefficients
$\beta_a \in \mathbb{R}^P$ are drawn with $\beta_{a,j} \overset{\mathrm{iid}}{\sim}
\mathcal{N}(0,1)$. The mean reward is $\mu_a(x) = \beta_a^\top x$.

\paragraph{Friedman variants ($P \in \{5, 20\},\ K = 2$).}
Contexts are i.i.d.\ $X_t \sim \mathrm{Unif}([0,1]^P)$.
The base Friedman1 function is
\[
  f(x_1,\ldots,x_5)
    = 10\sin(\pi x_1 x_2) + 20(x_3 - 0.5)^2 + 10 x_4 + 5 x_5.
\]
For \textsc{Friedman2} and \textsc{Friedman3}, the first four coordinates are rescaled as
$x_1' = 100 x_1$, $x_2' = 40\pi + 520\pi x_2$, $x_3' = x_3$, $x_4' = 1 + 10 x_4$, with
\[
  f_2(x) = \tfrac{1}{125}\sqrt{(x_1')^2 + \bigl(x_2' x_3' - (x_2' x_4')^{-1}\bigr)^2},
  \qquad
  f_3(x) = \tfrac{1}{0.1}\arctan\!\Bigl(\tfrac{x_2' x_3' - (x_2' x_4')^{-1}}{x_1'}\Bigr).
\]
Arm 1 uses the scenario-specific function ($f$, $f_2$, or $f_3$). Arm 2 uses a
\emph{shared} variant $\mu_2(x) = f(x_1,\ldots,x_5) + 5\sin(\pi x_1 x_2)$, or a
\emph{disjoint} variant $\mu_2(x) = f(x_P, x_{P-1}, \ldots, x_1)$ (reversed feature order).
Sparse variants increase context dimensionality to $P = 20$ while keeping the reward
functions low-dimensional.

\paragraph{Friedman-Heteroscedastic.}
Same mean functions as the shared Friedman variant, with arm-specific noise: for each arm
$a$, $\sigma_a^2 = 10^{U_a}$ where $U_a \sim \mathrm{Unif}(-1, 1)$ is drawn once per
replication.

\paragraph{SynBART ($P = 4,\ K = 3$).}
Contexts are i.i.d.\ $X_t \sim \mathrm{Unif}([0,1]^P)$. Each arm's mean reward function
$\mu_a(\cdot)$ is sampled from a BART prior \citep{chipman2010bart} with $m = 100$ trees,
depth-geometric splitting prior with $\alpha = 0.45$, Gaussian leaf prior with $\kappa = 2$,
and Dirichlet-sparse split probabilities with $(\zeta, \xi) = (1, 1)$. Noise variance is
fixed to $\sigma^2 = 0.01$. This DGP is correctly specified for BFTS and represents a
challenging nonlinear benchmark for methods with a different prior.

\subsection{OpenML Dataset Details}
\label{app:openml}

We evaluate on eight OpenML classification datasets: Adult, Covertype, EEGEyeState, GasDrift,
MagicTelescope, Mushroom, PageBlocks, and Shuttle. Each dataset defines a contextual
bandit: the feature vector is the context, each class label is an arm, and the reward is 1
for selecting the true class and 0 otherwise. We follow the preprocessing of
\citet{zhang2021neural,deng2026bfts}: features are standardized, categorical variables are
one-hot encoded, and the interaction horizon is capped at $T = 10{,}000$ (or the dataset
size if smaller; Mushroom has 8{,}124 rows). LinTS and NeuralTS hyperparameters are
selected on a separate low-budget pilot run (one replication with a different seed) to
avoid post-hoc tuning.

\begin{remark}
Pilot tuning gives LinTS and NeuralTS a limited offline tuning budget relative to a
strictly online setting. PFN-TS uses the same default settings across datasets.
\end{remark}

\subsection{Drink Less Data Processing}
\label{app:drinkless}

The Drink Less data come from a micro-randomized trial (MRT) \citep{bell2020drinkless}
involving $n = 349$ participants over 30 days. At 8:00~p.m.\ daily (local time),
participants were randomized among three push-notification actions with static propensities
$(0.4, 0.3, 0.3)$: (i)~no message, (ii)~a standard message, and (iii)~a message drawn from
a bank of tailored variants. The binary reward $R_{i,d} \in \{0,1\}$ indicates proximal
engagement: whether participant $i$ opened the app within one hour of the 8~p.m.\ decision
point on day $d$.

The panel $\{(X_{i,d}, A_{i,d}, R_{i,d})\}$ is unfolded into a sequential stream of
$T = 349 \times 30 = 10{,}470$ decision points, ordered by day $d$ within each
participant $i$ and then interleaved across participants. This ordering preserves
within-participant temporal structure while allowing sequential simulation of adaptive
algorithms. The context $X_t$ includes static participant covariates (\texttt{AUDIT\_score},
\texttt{age}, \texttt{gender}, \texttt{employment\_type}), the time-varying covariate
\texttt{days\_since\_download}, and a one-hot user identifier to capture individual
fixed effects. Features are standardized prior to bandit simulation.

\paragraph{OPE methodology.}
Since the algorithms cannot be deployed prospectively, we use offline replay-style
sequential simulation \citep{li_unbiased_2011}: at each step $t$, the algorithm proposes an
action and is updated only if its proposal matches the logged action $A_t$, effectively
using the logged data as an unbiased estimator of the interactive bandit environment under
the logging propensities. Policy value is estimated via self-normalized importance
sampling (SNIPS;~\citet{swaminathan2015self}):
\[
  \widehat{V}_{\mathrm{SNIPS}}(\pi)
    = \frac{\sum_t w_t(\pi)\, R_t}{\sum_t w_t(\pi)},
  \qquad
  w_t(\pi) = \frac{\pi(A_t \mid X_t)}{\pi_0(A_t)},
\]
where $\pi_0(a) \in \{0.4, 0.3, 0.3\}$ are the known logging propensities.
Uncertainty is quantified via a user-level cluster bootstrap with $B = 30$ replicates.
A doubly-robust (DR) estimator \citep{dudik2011doubly} is provided as a robustness check
(see \Cref{app:ope_diagnostics}).

Since the logging propensities are bounded away from zero, importance weights are bounded
by $1/\min_a \pi_0(a) = 10/3$, ruling out severe positivity violations.
Weight diagnostics are reported in \Cref{app:ope_diagnostics}.

\subsection{Hyperparameter Settings}
\label{app:hyperparams}

\paragraph{PFN-TS.}
Unless stated otherwise in the ablation studies, PFN-TS uses the following defaults across
all experiments:
\begin{itemize}
  \item \textbf{Subsampling base}: $b = 2$ (geometric grid).
  \item \textbf{Context window}: no truncation; the full observed history for the relevant
    arm/encoding is retained.
  \item \textbf{Warm-up}: $\tau = 5$ round-robin pulls per arm before Thompson sampling.
  \item \textbf{Backbone model}: TabICL \citep{qu2026tabiclv2} with KV-cache enabled for
    the fixed training set per arm.
  \item \textbf{Encoding}: adaptive (disjoint for $n \leq N_\mathrm{thr}$, joint otherwise,
    with $N_\mathrm{thr} = 128$).
\end{itemize}

\paragraph{Baselines.}
Baseline implementations and fixed settings follow \citet{deng2026bfts} unless noted
below.
\begin{itemize}
  \item \textbf{LinTS}: exploration parameter $\nu$ tuned from $\{1.0, 0.1, 0.01\}$ on a
    pilot run; $\lambda = 1$ (fixed).
  \item \textbf{LinUCB}: exploration parameter $\alpha = 1$ (fixed).
  \item \textbf{NeuralTS}: $\lambda$ tuned from $\{1, 0.1, 0.01, 0.001\}$ and $\nu$ from
    $\{0.1, 0.01, 0.001, 0.0001\}$ on a pilot run.
  \item \textbf{BFTS}: default settings of \citep{deng2026bfts} with $m = 100$ trees per arm, depth-exponential prior with $\alpha = 0.45$, Dirichlet-sparse split
    probabilities, $n_{\mathrm{post}} = 500$, $n_{\mathrm{burn}} = 500$, 4 chains,
    logarithmic refresh schedule, and the $\tau = 5$ warm-up convention.
  \item \textbf{RFTS, XGBoostTS}: default settings of \citet{nilsson2024tree} with
    $\nu = 1$ (fixed).
\end{itemize}

\section{Additional Experimental Results}
\label{app:additional}

\subsection{Full Synthetic Regret Curves}
\label{app:full_synthetic}

\Cref{fig:app:syn_regret_curves_full} provides cumulative regret trajectories across all
synthetic scenarios described in \Cref{app:dgp}.

\begin{figure}[p]
  \centering
  \begin{subfigure}[t]{0.48\textwidth}
    \includegraphics[width=\textwidth]{regrets_curves_Friedman.pdf}
    \caption{Friedman}
  \end{subfigure}\hfill
  \begin{subfigure}[t]{0.48\textwidth}
    \includegraphics[width=\textwidth]{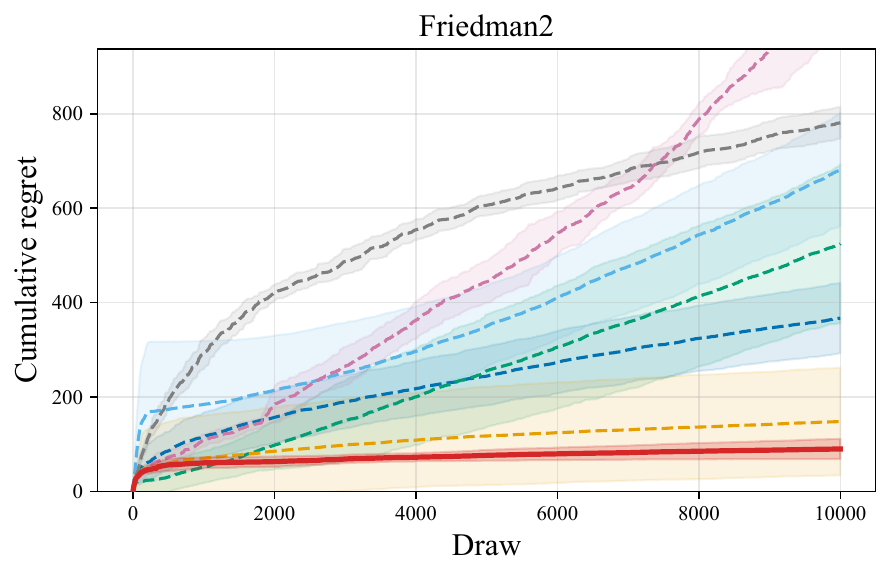}
    \caption{Friedman2}
  \end{subfigure}
  \\[1ex]
  \begin{subfigure}[t]{0.48\textwidth}
    \includegraphics[width=\textwidth]{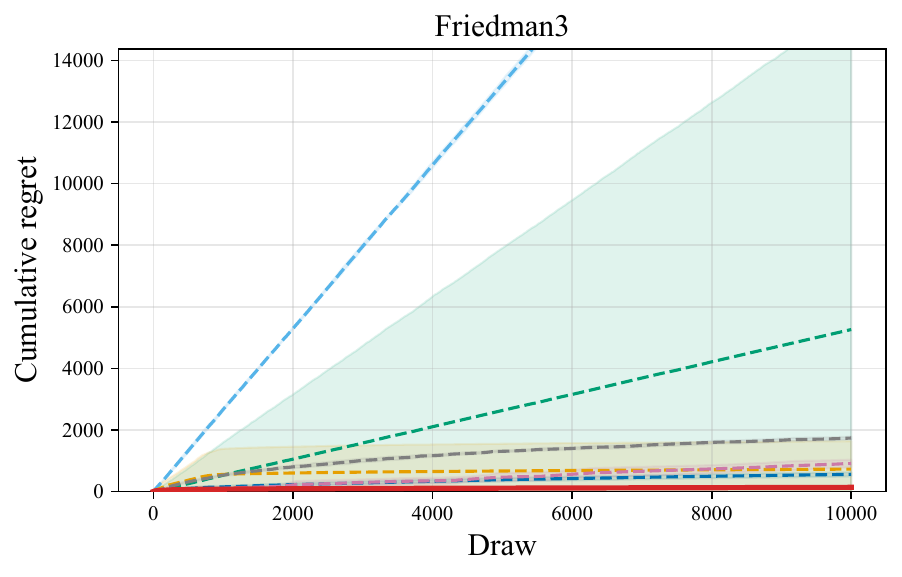}
    \caption{Friedman3}
  \end{subfigure}\hfill
  \begin{subfigure}[t]{0.48\textwidth}
    \includegraphics[width=\textwidth]{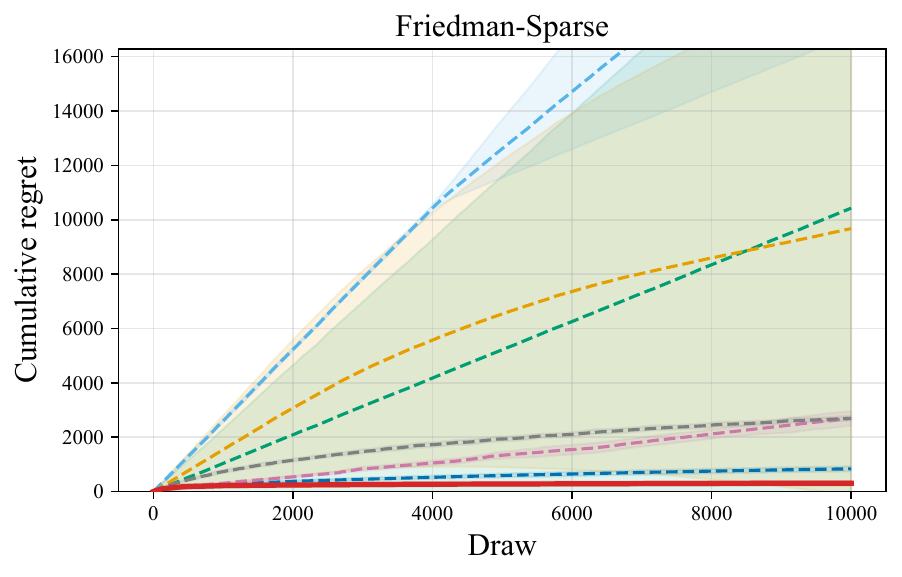}
    \caption{Friedman-Sparse}
  \end{subfigure}
  \\[1ex]
  \begin{subfigure}[t]{0.48\textwidth}
    \includegraphics[width=\textwidth]{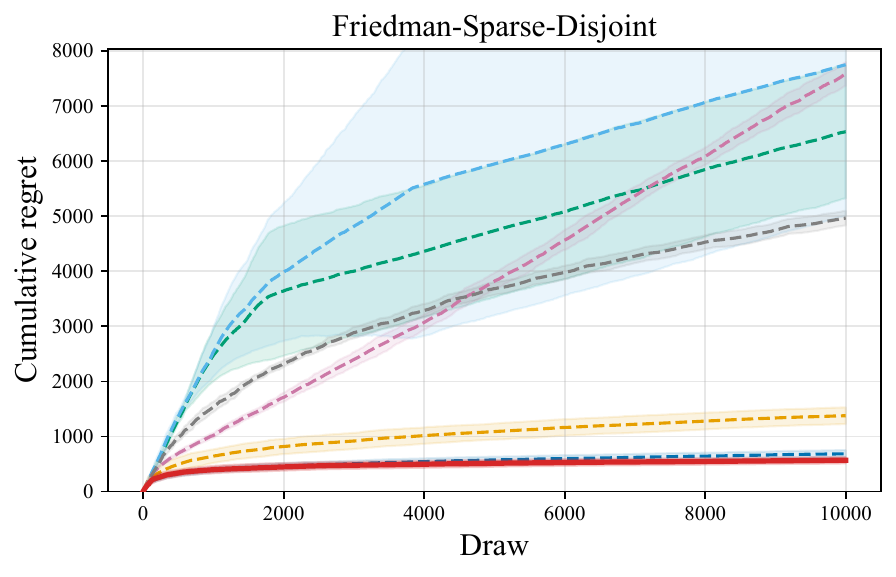}
    \caption{Friedman-Sparse-Disjoint}
  \end{subfigure}\hfill
  \begin{subfigure}[t]{0.48\textwidth}
    \includegraphics[width=\textwidth]{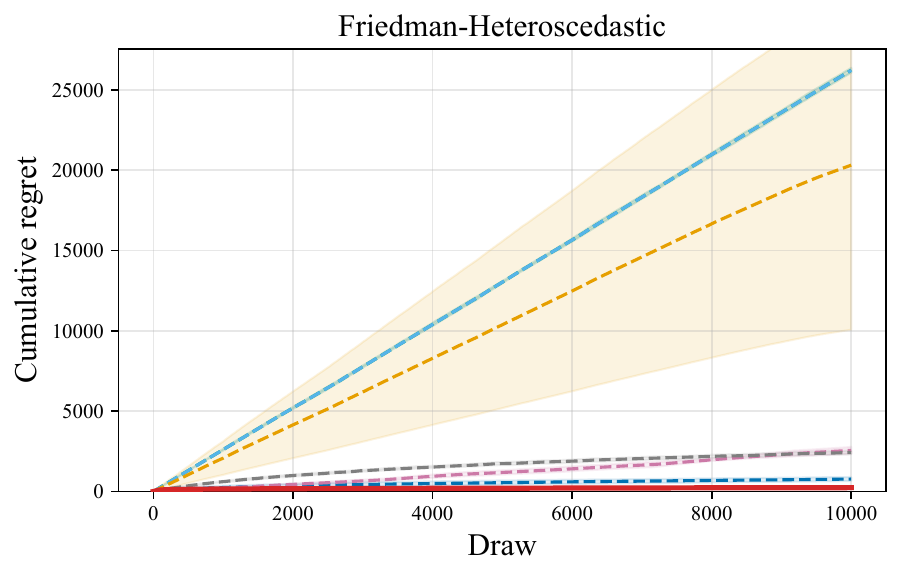}
    \caption{Friedman-Heteroscedastic}
  \end{subfigure}
  \\[1ex]
  \begin{subfigure}[t]{0.48\textwidth}
    \includegraphics[width=\textwidth]{regrets_curves_Linear.pdf}
    \caption{Linear}
  \end{subfigure}\hfill
  \begin{subfigure}[t]{0.48\textwidth}
    \includegraphics[width=\textwidth]{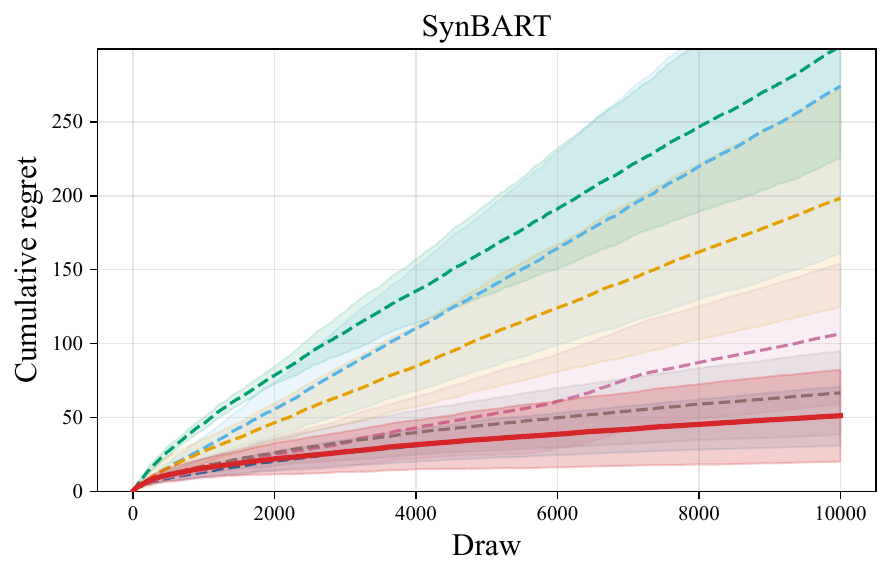}
    \caption{SynBART}
  \end{subfigure}
  \caption{Cumulative regret trajectories across all synthetic scenarios
  (mean $\pm$ SD, $R = 5$ replications, $T = 10{,}000$).}
  \label{fig:app:syn_regret_curves_full}
\end{figure}

\subsection{SubCLT Coverage Diagnostics}
\label{app:calibration}

To isolate the quality of the SubCLT approximation from adaptive data collection, we run
offline regression diagnostics on the Linear and Friedman DGPs. The DGPs match those in
\Cref{app:dgp}, but the bandit structure is removed: we treat each arm in isolation and
evaluate coverage on arm~0 only (Linear: $P = 10$, $\sigma^2 = 1$; Friedman: $P = 5$,
Friedman1 variant with shared correlation, $\sigma^2 = 1$). For each DGP, training size
$n \in \{16, 64, 256, 1024\}$, and replication, we independently resample the DGP
parameters, training set, and $50$ held-out query points. Coverage is evaluated for the
latent mean reward $f_0(x)$, not for a noisy reward draw.

TabICL-SubCLT forms nominal 95\% intervals from the predictive-sequence approximation
$\mathcal{N}(m_{s_n}(x), \hat{V}_{s_n}(x)/s_n)$ with geometric base $b = 2$.
For comparison, BART intervals are computed from 500 posterior draws ($m = 100$ trees, $n_{\mathrm{post}} = 500$, $n_{\mathrm{burn}} = 500$, $\alpha = 0.95$, $\beta = 2.0$)---note that we use the default BART prior of Chipman et al. with their recommended settings, which differs from the BFTS choices in \Cref{app:hyperparams}.
\Cref{fig:clt_frontier} reports empirical coverage and mean interval length, averaged
over 10 replications and 50 query points per replication.

\subsection{Full OpenML Results}
\label{app:full_openml}

\Cref{fig:app:openml_regret_curves} provides cumulative regret trajectories across all eight
OpenML datasets.

\begin{figure}[p]
  \centering
  \begin{subfigure}[t]{0.48\textwidth}
    \includegraphics[width=\textwidth]{regrets_curves_OpenML-Adult.pdf}
    \caption{Adult}
  \end{subfigure}\hfill
  \begin{subfigure}[t]{0.48\textwidth}
    \includegraphics[width=\textwidth]{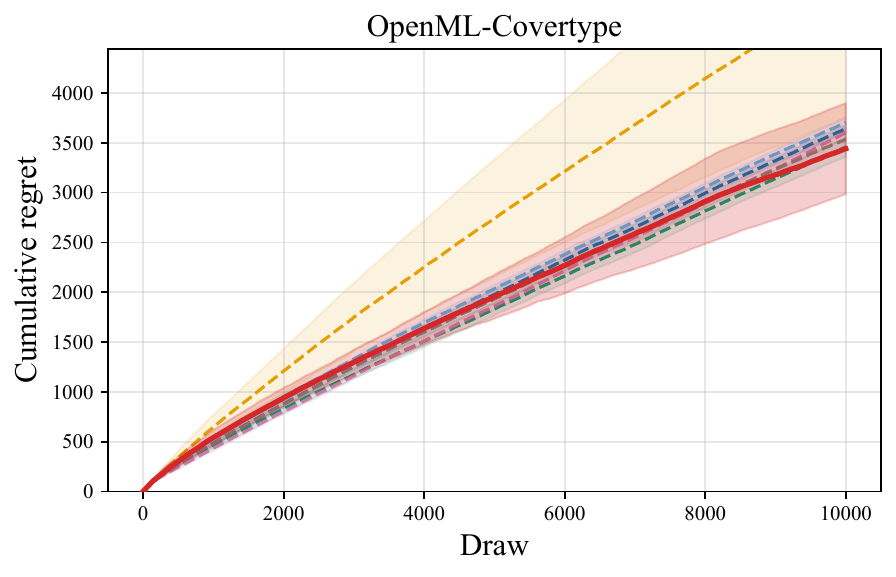}
    \caption{Covertype}
  \end{subfigure}
  \\[1ex]
  \begin{subfigure}[t]{0.48\textwidth}
    \includegraphics[width=\textwidth]{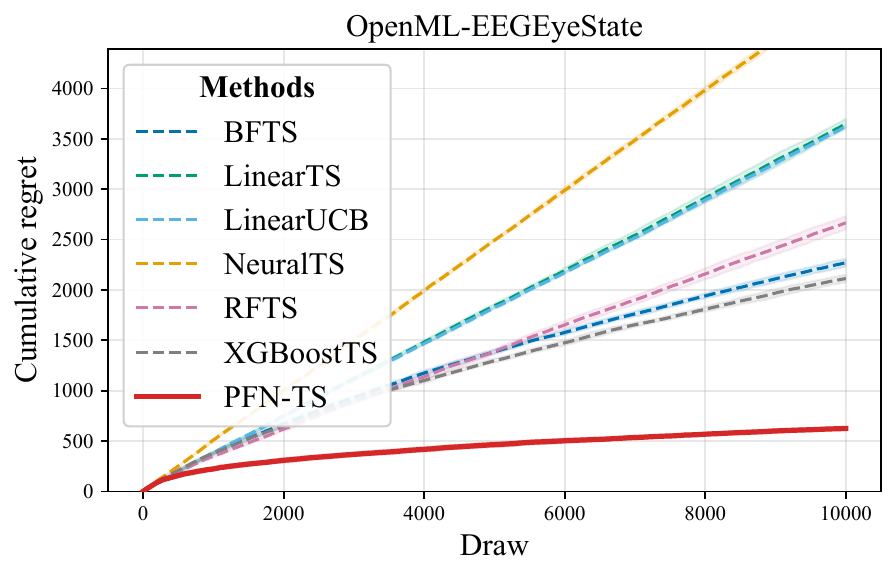}
    \caption{EEGEyeState}
  \end{subfigure}\hfill
  \begin{subfigure}[t]{0.48\textwidth}
    \includegraphics[width=\textwidth]{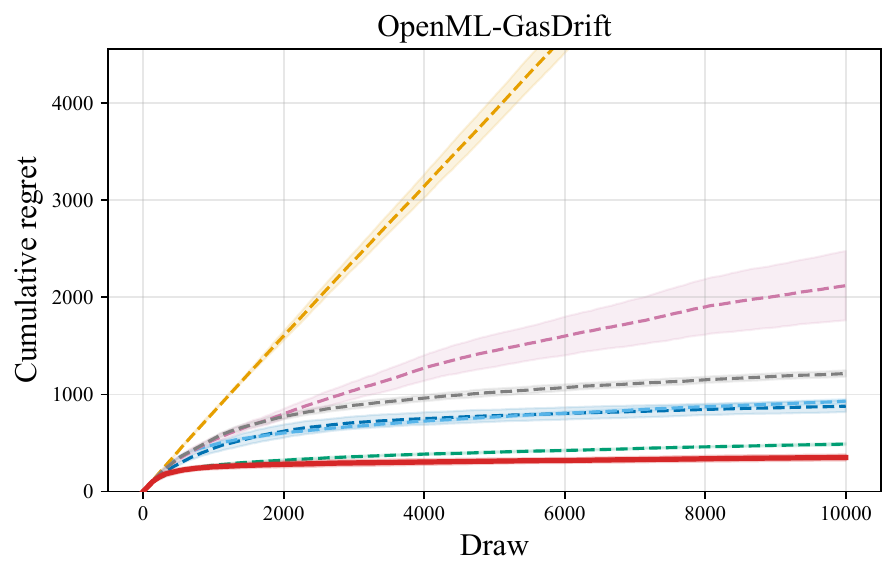}
    \caption{GasDrift}
  \end{subfigure}
  \\[1ex]
  \begin{subfigure}[t]{0.48\textwidth}
    \includegraphics[width=\textwidth]{regrets_curves_OpenML-MagicTelescope.pdf}
    \caption{MagicTelescope}
  \end{subfigure}\hfill
  \begin{subfigure}[t]{0.48\textwidth}
    \includegraphics[width=\textwidth]{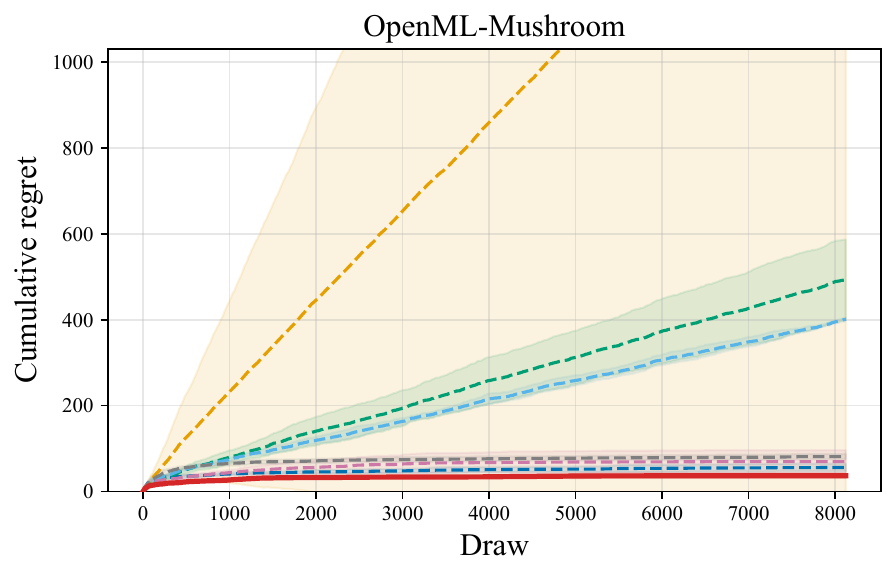}
    \caption{Mushroom}
  \end{subfigure}
  \\[1ex]
  \begin{subfigure}[t]{0.48\textwidth}
    \includegraphics[width=\textwidth]{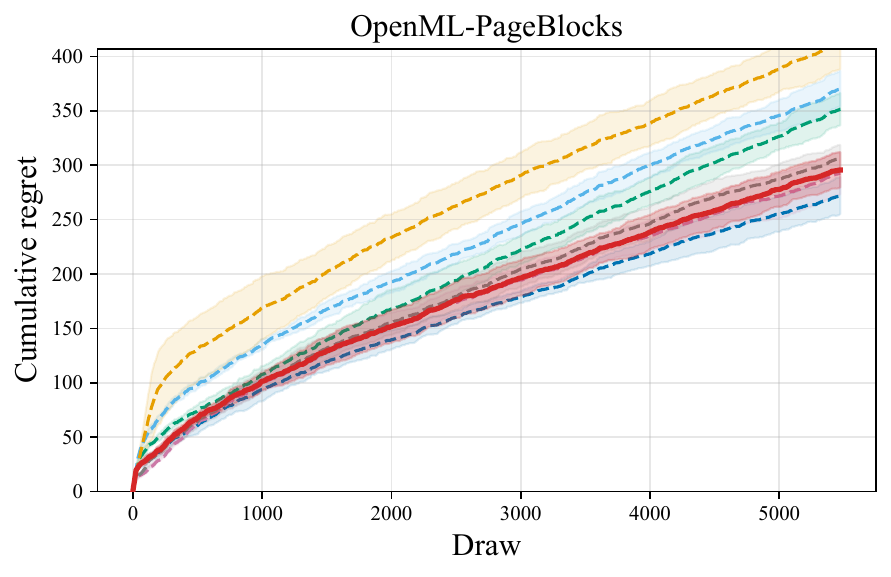}
    \caption{PageBlocks}
  \end{subfigure}\hfill
  \begin{subfigure}[t]{0.48\textwidth}
    \includegraphics[width=\textwidth]{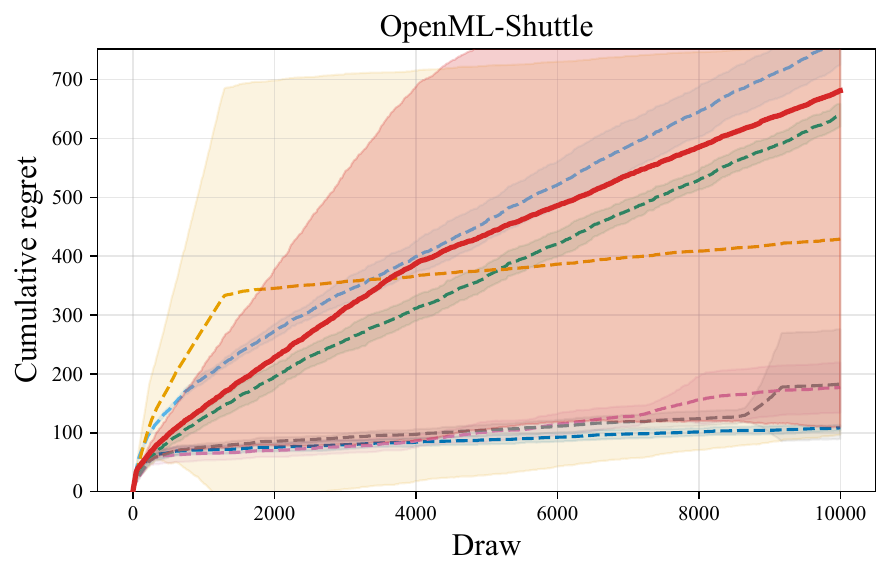}
    \caption{Shuttle}
  \end{subfigure}
  \caption{Cumulative regret trajectories on all eight OpenML datasets
  (mean $\pm$ SD, $R = 5$ replications).}
  \label{fig:app:openml_regret_curves}
\end{figure}

\subsection{OPE Diagnostics}
\label{app:ope_diagnostics}

\paragraph{Importance weight diagnostics.}
Under the static logging propensities $(0.4, 0.3, 0.3)$, importance weights are bounded by
$1 / \min_a \pi_0(a) = 10/3$. \Cref{fig:app:drinkless_weights} (left) shows the empirical
weight distribution; weights remain close to their maximum, confirming that positivity
violations are not a concern and SNIPS variance is well controlled.

\paragraph{Doubly-robust estimates.}
For each bootstrap replicate, a per-arm ridge outcome model $\hat{q}(x, a)$ is fit via
2-fold cross-fitting and used to form the DR estimate
\[
  \widehat{V}_{\mathrm{DR}}(\pi)
    = \frac{1}{n}\sum_{i=1}^n \Bigl[
        \sum_{a} \pi(a \mid x_i)\,\hat{q}(x_i, a)
        + w_i \bigl(r_i - \hat{q}(x_i, a_i)\bigr)
      \Bigr].
\]
\Cref{fig:app:drinkless_weights} (right) shows the DR policy value curve over the horizon;
the pattern closely tracks the SNIPS estimates, confirming robustness to the choice of
estimator.

\begin{figure}[tb]
  \centering
  \begin{subfigure}[t]{0.32\textwidth}
    \includegraphics[width=\textwidth]{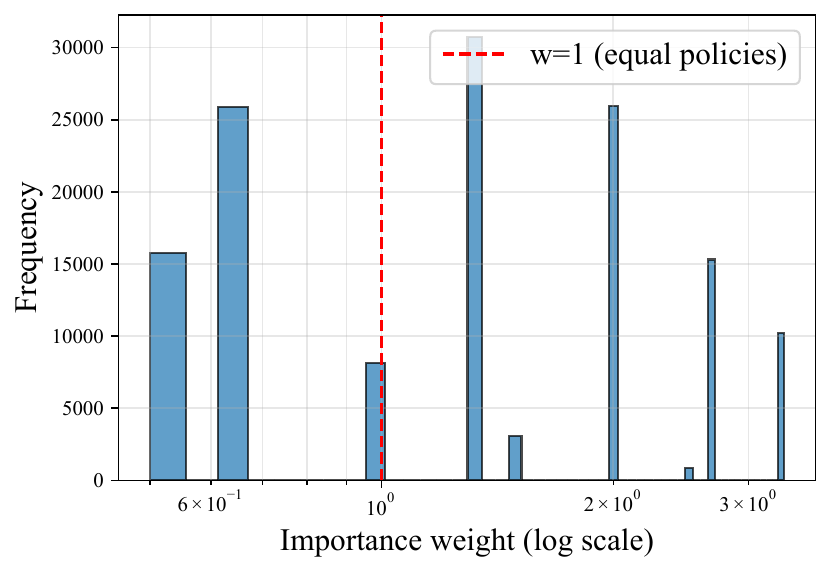}
    \caption{PFN-TS importance weights}
  \end{subfigure}\hfill
  \begin{subfigure}[t]{0.32\textwidth}
    \includegraphics[width=\textwidth]{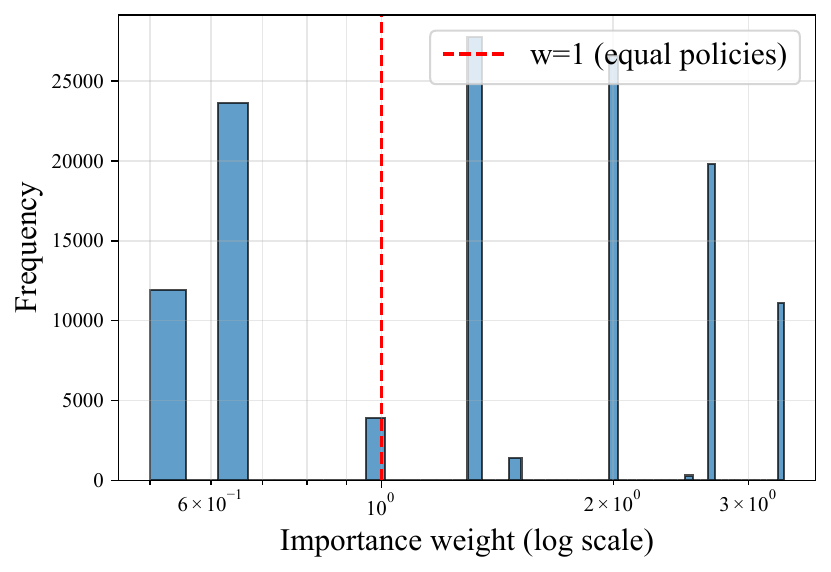}
    \caption{BFTS importance weights}
  \end{subfigure}\hfill
  \begin{subfigure}[t]{0.32\textwidth}
    \includegraphics[width=\textwidth]{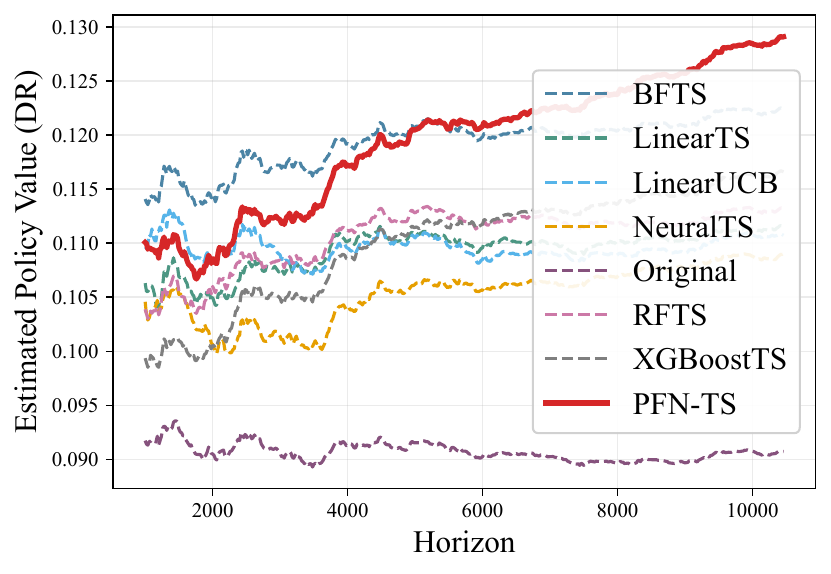}
    \caption{DR policy value}
  \end{subfigure}
  \caption{Drink Less OPE diagnostics. Left/centre: empirical importance weight
  distributions for PFN-TS and BFTS (log scale); weights are bounded by $10/3$ under
  the logging propensities, confirming no positivity violations. Right: policy value
  estimated via the doubly-robust estimator; the trajectory closely tracks the SNIPS
  estimates in \Cref{fig:ope}, confirming robustness to the choice of OPE estimator.}
  \label{fig:app:drinkless_weights}
\end{figure}

\begin{figure}[tb]
  \centering
  \begin{subfigure}[t]{0.48\textwidth}
    \includegraphics[width=\textwidth]{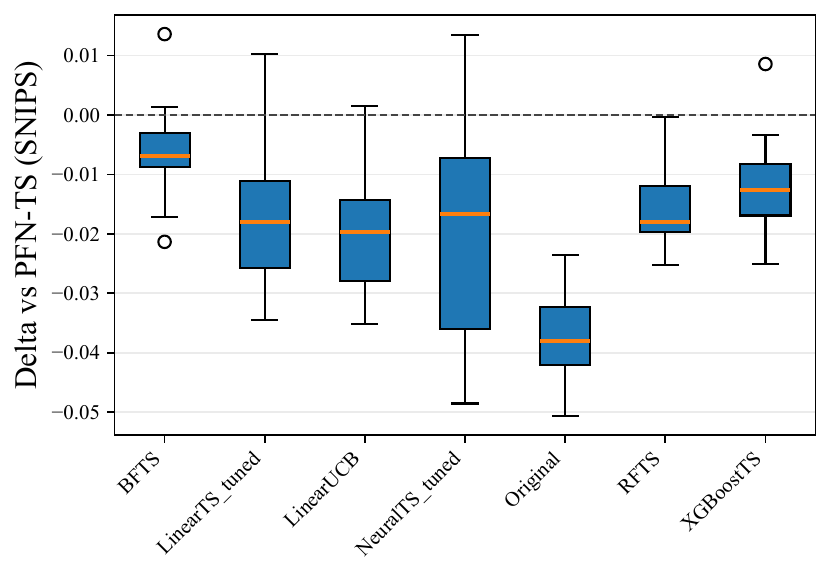}
    \caption{Final policy-value gap (SNIPS)}
  \end{subfigure}\hfill
  \begin{subfigure}[t]{0.48\textwidth}
    \includegraphics[width=\textwidth]{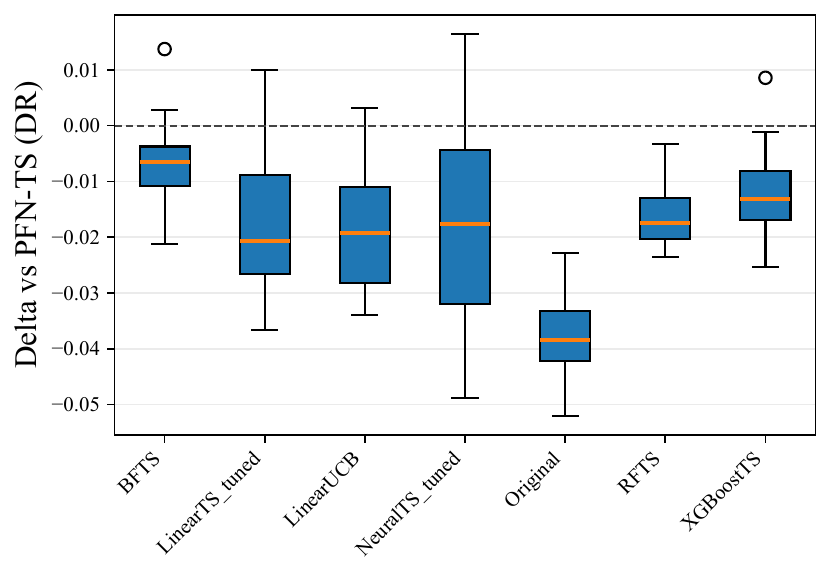}
    \caption{Final policy-value gap (DR)}
  \end{subfigure}
  \\[1ex]
  \begin{subfigure}[t]{0.48\textwidth}
    \includegraphics[width=\textwidth]{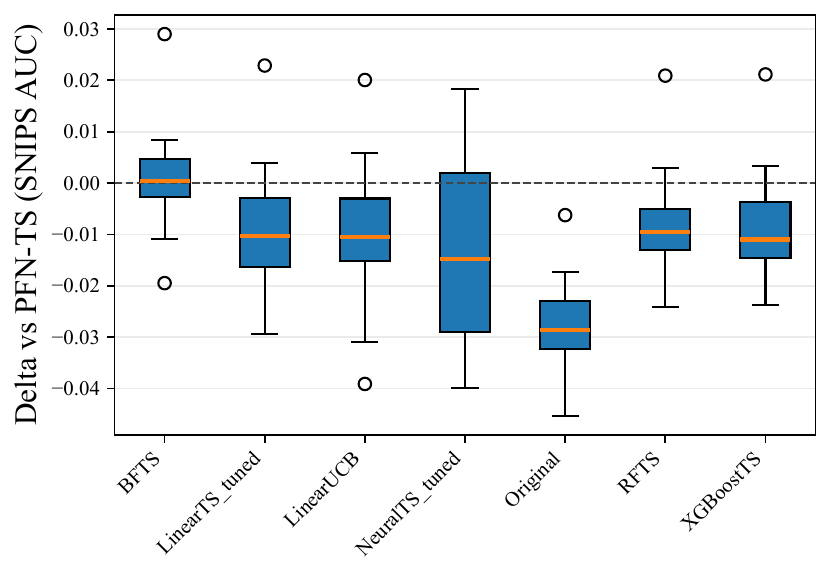}
    \caption{AUC-mean policy-value gap (SNIPS)}
  \end{subfigure}\hfill
  \begin{subfigure}[t]{0.48\textwidth}
    \includegraphics[width=\textwidth]{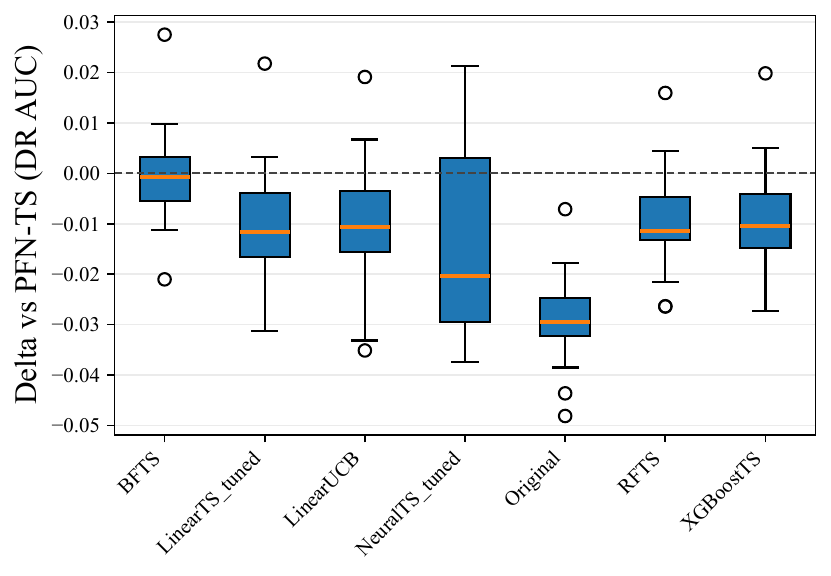}
    \caption{AUC-mean policy-value gap (DR)}
  \end{subfigure}
  \caption{Bootstrap distributions ($B = 30$ user-level cluster replicates) of the
  policy-value gap between each method and the best-performing baseline, on the
  Drink Less trial. Top: gap at the final horizon $T = 10{,}470$. Bottom: gap
  integrated over the full horizon (AUC mean). Positive values indicate that the
  method outperforms the best baseline.}
  \label{fig:app:drinkless_ope_gap_boxplots}
\end{figure}

\subsection{Ablation Plots}
\label{app:ablations}

\paragraph{Encoding strategy.}
\Cref{fig:app:ablation_encoding} compares PFN-TS (adaptive), PFN-TS-Disjoint (fixed
separate model), and PFN-TS-Joint (arm index concatenated to context) across synthetic and
OpenML scenarios.

\begin{figure}[tb]
  \centering
  \begin{subfigure}[t]{0.48\textwidth}
    \includegraphics[width=\textwidth]{ablation-encoding/regrets_curves_Friedman.pdf}
    \caption{Friedman}
  \end{subfigure}\hfill
  \begin{subfigure}[t]{0.48\textwidth}
    \includegraphics[width=\textwidth]{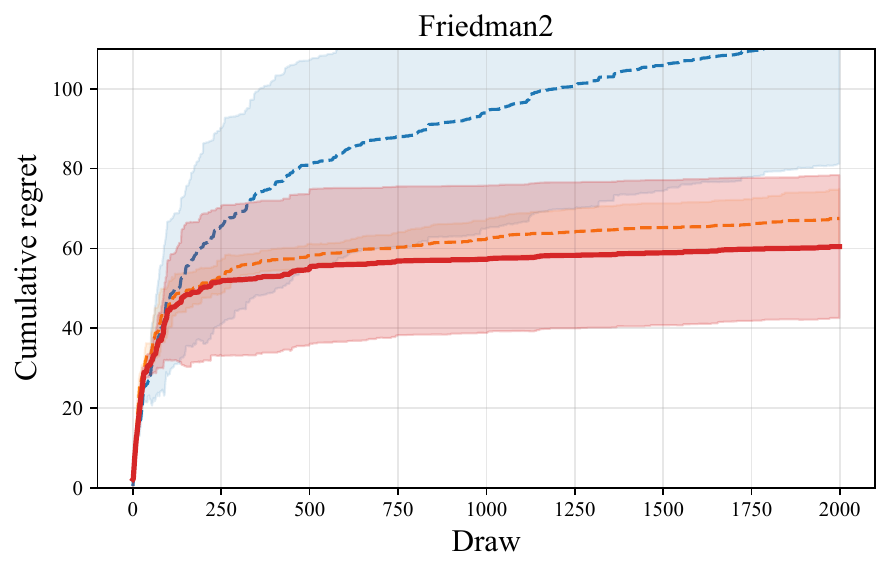}
    \caption{Friedman2}
  \end{subfigure}
  \\[1ex]
  \begin{subfigure}[t]{0.48\textwidth}
    \includegraphics[width=\textwidth]{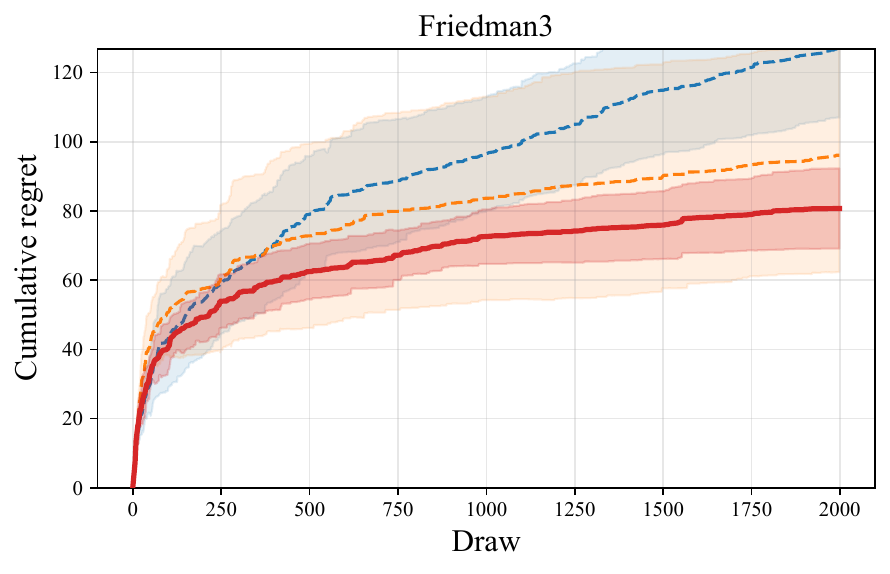}
    \caption{Friedman3}
  \end{subfigure}\hfill
  \begin{subfigure}[t]{0.48\textwidth}
    \includegraphics[width=\textwidth]{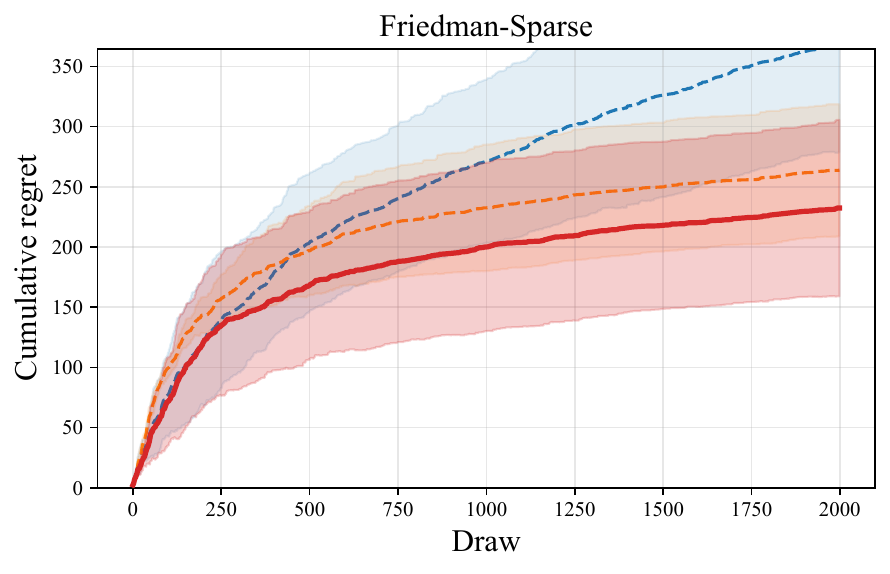}
    \caption{Friedman-Sparse}
  \end{subfigure}
  \\[1ex]
  \begin{subfigure}[t]{0.48\textwidth}
    \includegraphics[width=\textwidth]{ablation-encoding/regrets_curves_Friedman-Sparse-Disjoint.pdf}
    \caption{Friedman-Sparse-Disjoint}
  \end{subfigure}\hfill
  \begin{subfigure}[t]{0.48\textwidth}
    \includegraphics[width=\textwidth]{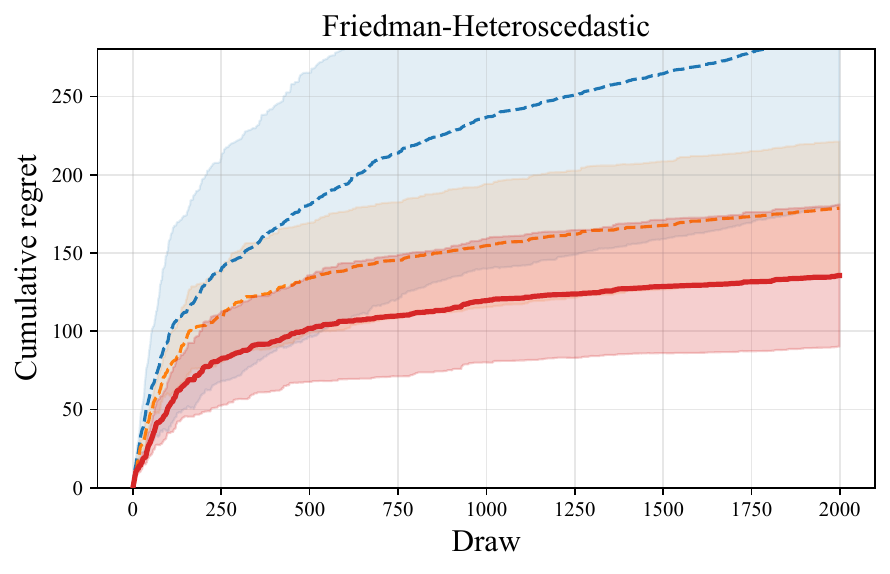}
    \caption{Friedman-Heteroscedastic}
  \end{subfigure}
  \\[1ex]
  \begin{subfigure}[t]{0.48\textwidth}
    \includegraphics[width=\textwidth]{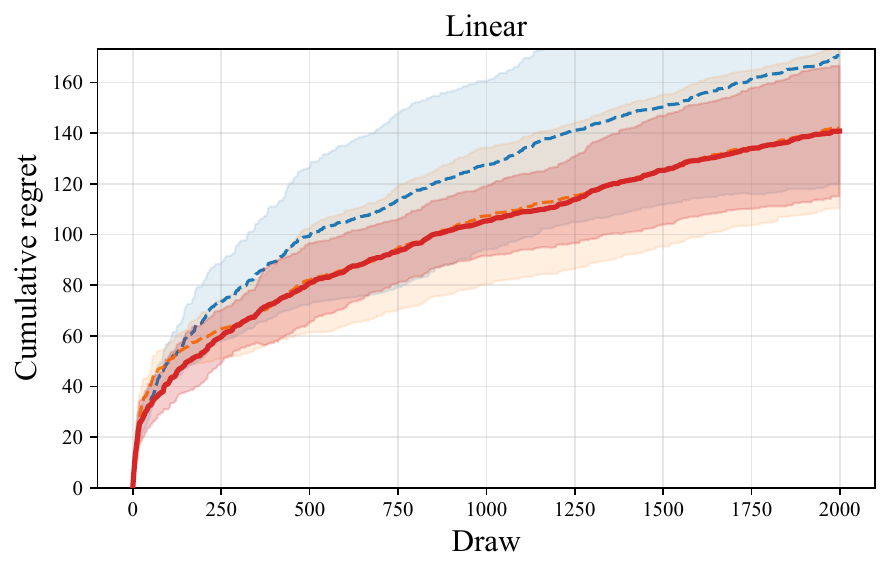}
    \caption{Linear}
  \end{subfigure}\hfill
  \begin{subfigure}[t]{0.48\textwidth}
    \includegraphics[width=\textwidth]{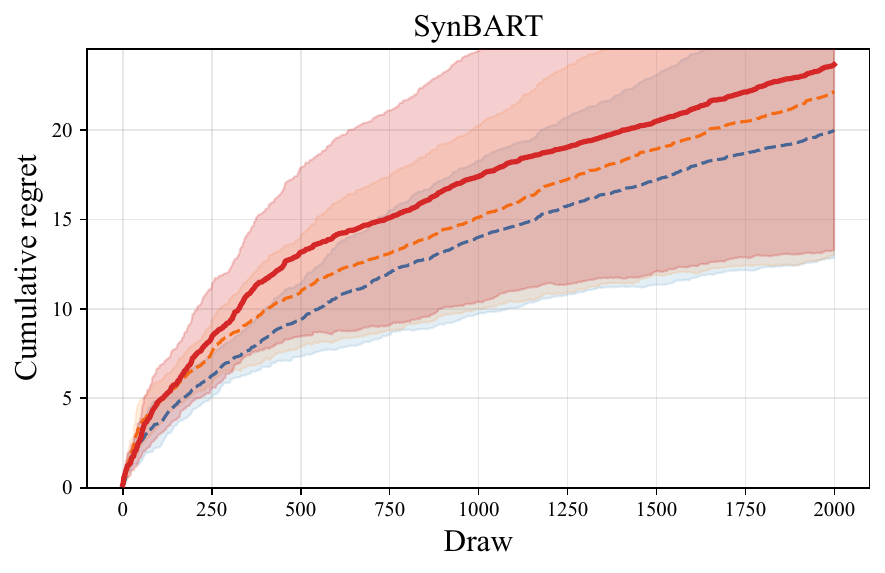}
    \caption{SynBART}
  \end{subfigure}
  \caption{Encoding ablation: cumulative regret trajectories comparing the adaptive
  encoding rule against \textit{Disjoint} (separate context per arm) and
  \textit{Joint} (arm index appended to context) across all synthetic scenarios
  (mean $\pm$ SD, $R = 5$ replications).}
  \label{fig:app:ablation_encoding}
\end{figure}

\paragraph{Subsampling grid.}
\Cref{fig:app:ablation_subsampling} compares PFN-TS with geometric base
$b \in \{1.5, 2, 3\}$ across the synthetic scenarios.

\begin{figure}[tb]
  \centering
  \begin{subfigure}[t]{0.48\textwidth}
    \includegraphics[width=\textwidth]{ablation-subsampling/regrets_curves_Friedman.pdf}
    \caption{Friedman}
  \end{subfigure}\hfill
  \begin{subfigure}[t]{0.48\textwidth}
    \includegraphics[width=\textwidth]{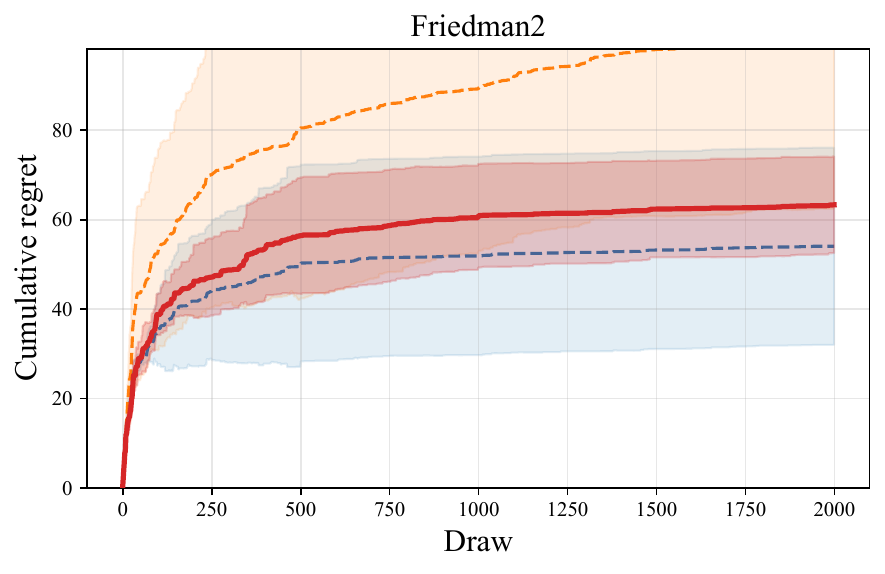}
    \caption{Friedman2}
  \end{subfigure}
  \\[1ex]
  \begin{subfigure}[t]{0.48\textwidth}
    \includegraphics[width=\textwidth]{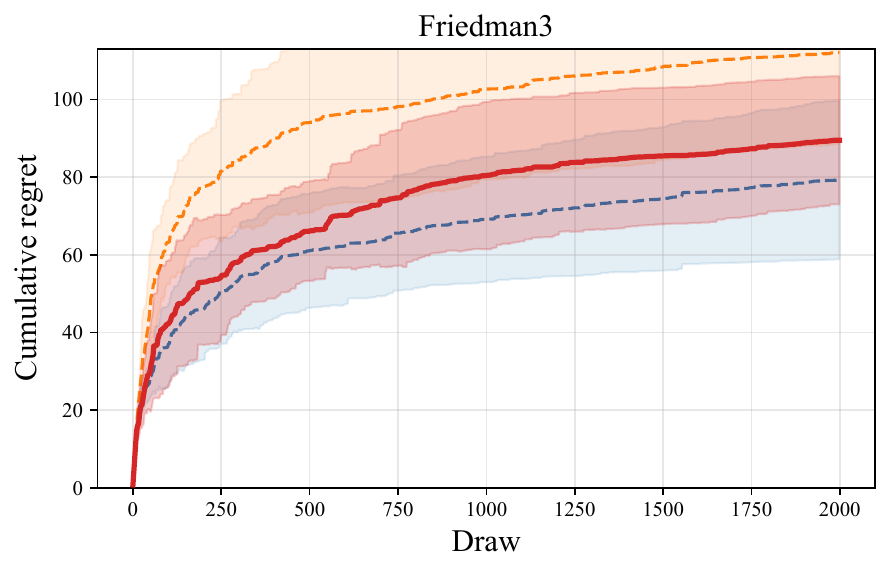}
    \caption{Friedman3}
  \end{subfigure}\hfill
  \begin{subfigure}[t]{0.48\textwidth}
    \includegraphics[width=\textwidth]{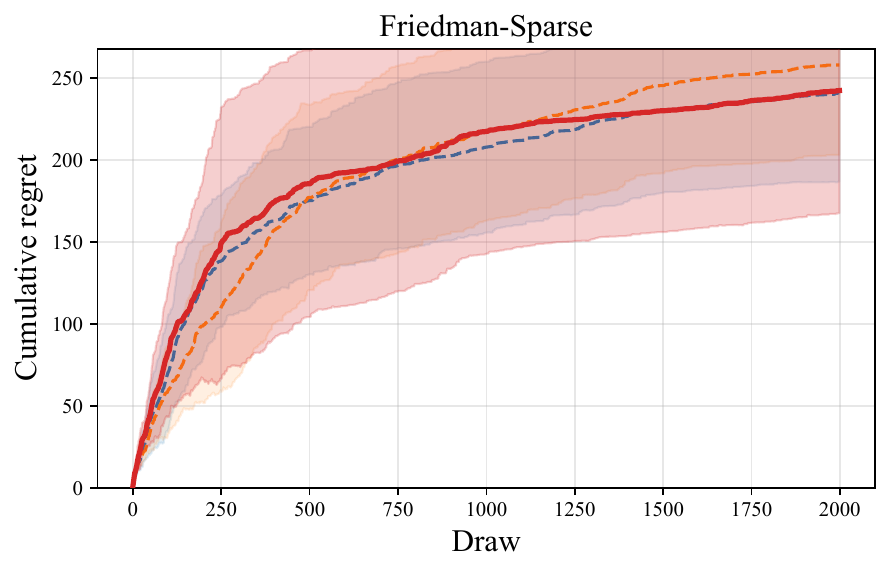}
    \caption{Friedman-Sparse}
  \end{subfigure}
  \\[1ex]
  \begin{subfigure}[t]{0.48\textwidth}
    \includegraphics[width=\textwidth]{ablation-subsampling/regrets_curves_Friedman-Sparse-Disjoint.pdf}
    \caption{Friedman-Sparse-Disjoint}
  \end{subfigure}\hfill
  \begin{subfigure}[t]{0.48\textwidth}
    \includegraphics[width=\textwidth]{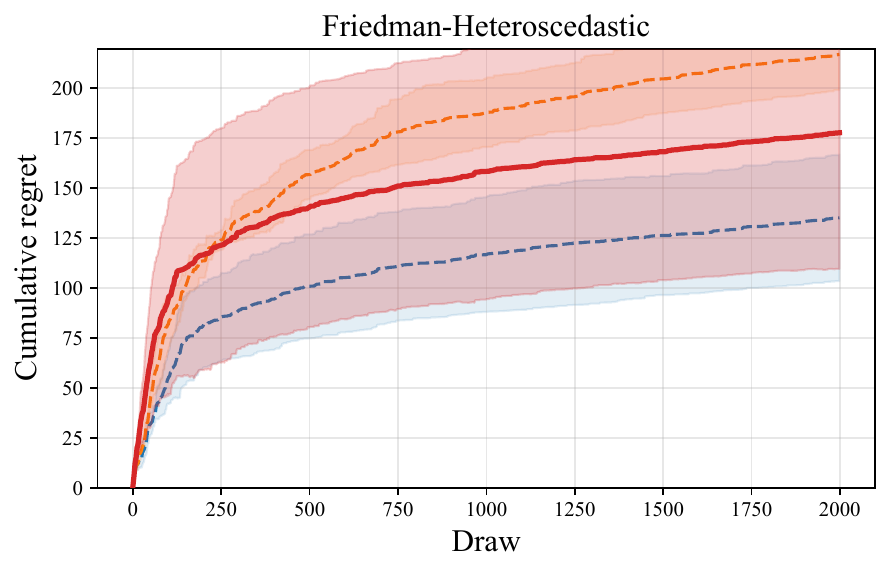}
    \caption{Friedman-Heteroscedastic}
  \end{subfigure}
  \\[1ex]
  \begin{subfigure}[t]{0.48\textwidth}
    \includegraphics[width=\textwidth]{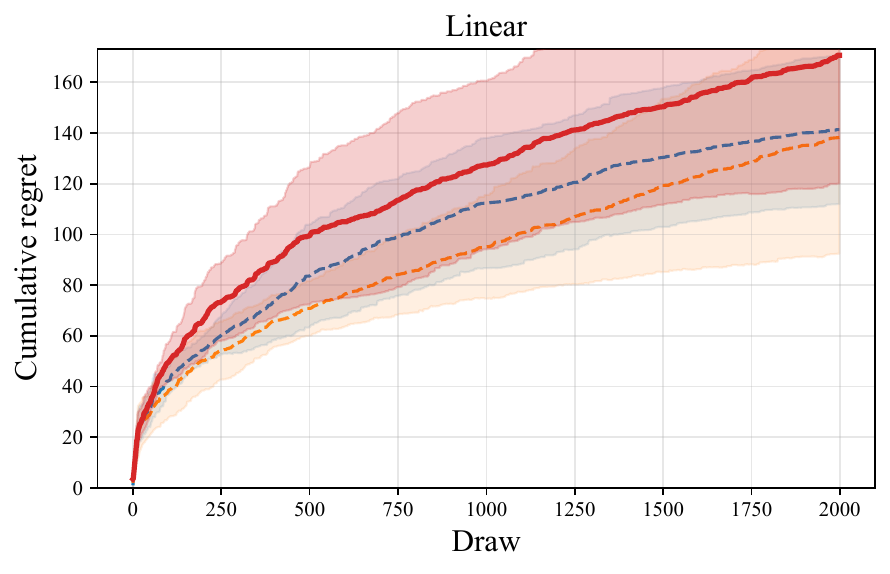}
    \caption{Linear}
  \end{subfigure}\hfill
  \begin{subfigure}[t]{0.48\textwidth}
    \includegraphics[width=\textwidth]{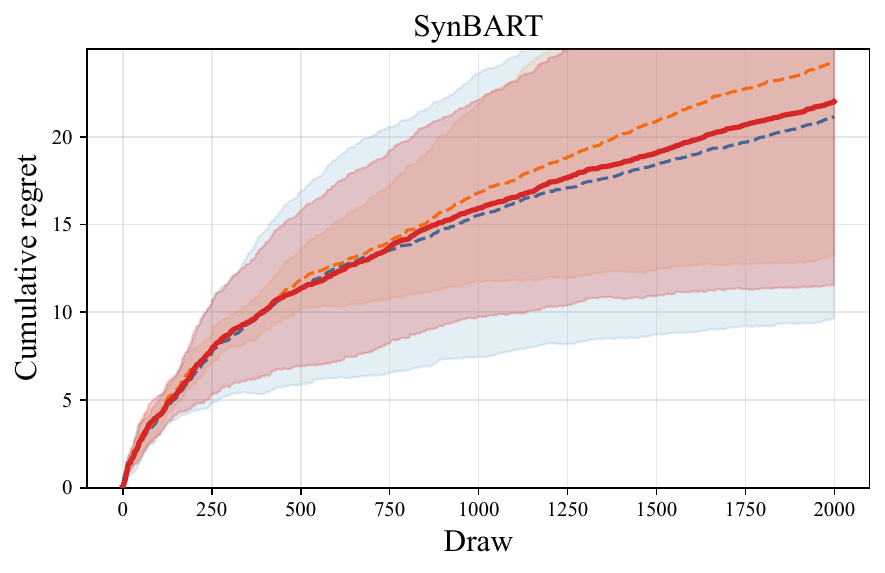}
    \caption{SynBART}
  \end{subfigure}
  \caption{Subsampling grid ablation: cumulative regret trajectories for geometric
  base $b \in \{1.5, 2, 3\}$ across all synthetic scenarios
  (mean $\pm$ SD, $R = 5$ replications).}
  \label{fig:app:ablation_subsampling}
\end{figure}

\paragraph{Decision rule.}
\Cref{fig:app:ablation_decision} compares Thompson sampling (PFN-TS), posterior predictive
sampling (PFN-PS), and a greedy baseline (highest predictive mean only).

\begin{figure}[p]
  \centering
  \begin{subfigure}[t]{0.48\textwidth}
    \includegraphics[width=\textwidth]{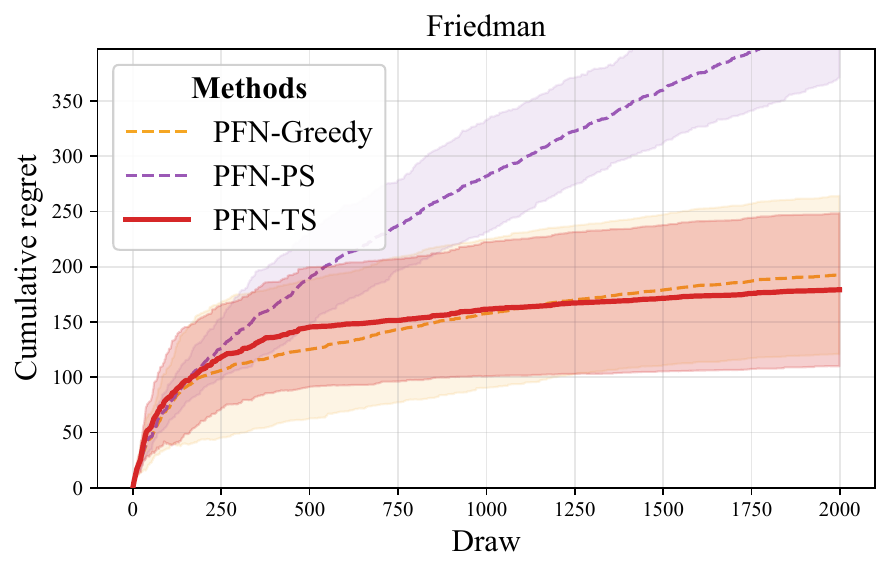}
    \caption{Friedman}
  \end{subfigure}\hfill
  \begin{subfigure}[t]{0.48\textwidth}
    \includegraphics[width=\textwidth]{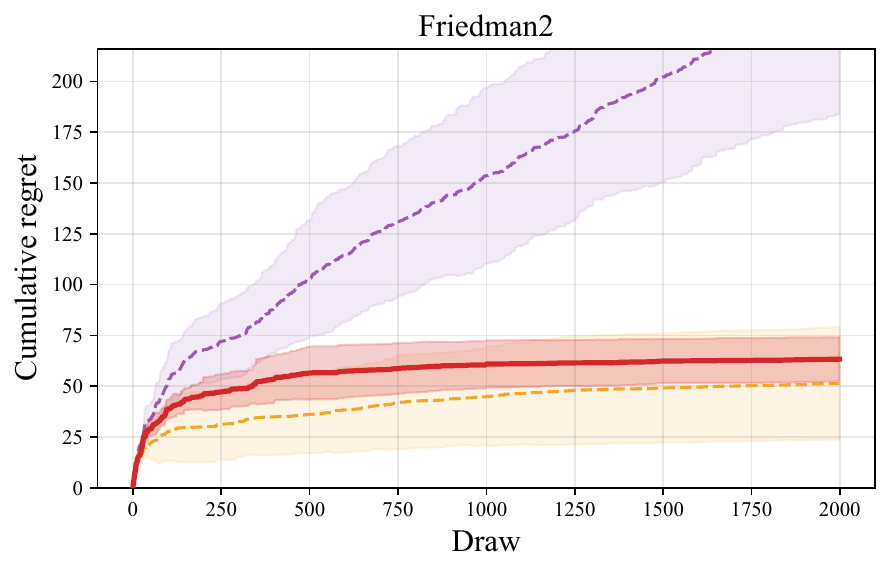}
    \caption{Friedman2}
  \end{subfigure}
  \\[1ex]
  \begin{subfigure}[t]{0.48\textwidth}
    \includegraphics[width=\textwidth]{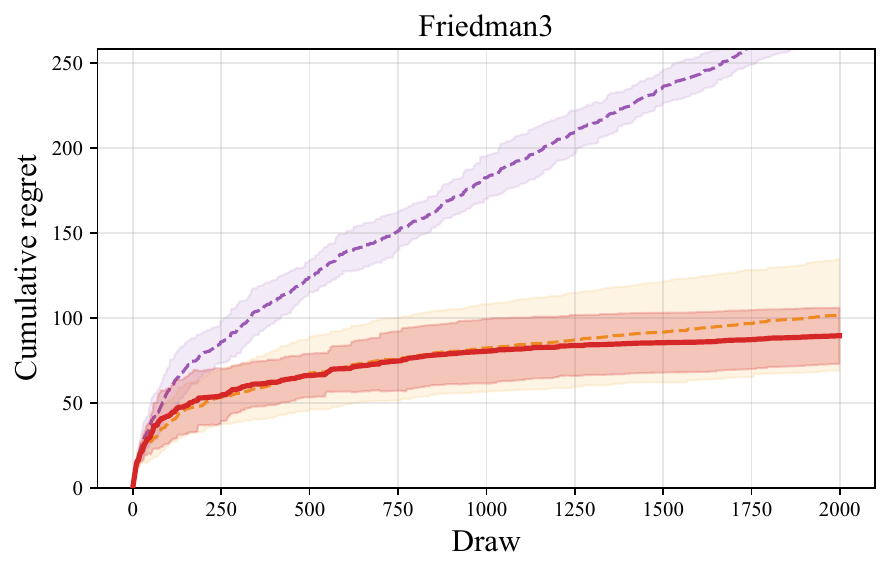}
    \caption{Friedman3}
  \end{subfigure}\hfill
  \begin{subfigure}[t]{0.48\textwidth}
    \includegraphics[width=\textwidth]{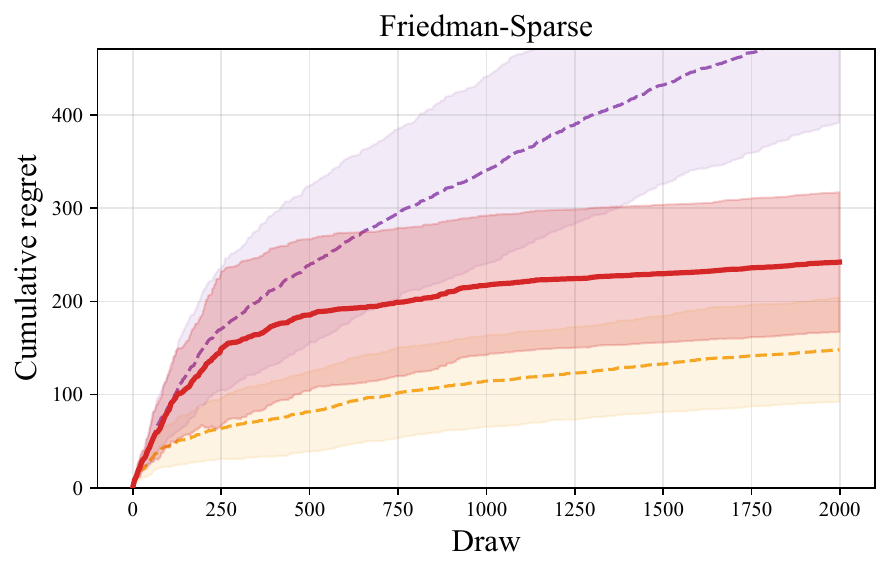}
    \caption{Friedman-Sparse}
  \end{subfigure}
  \\[1ex]
  \begin{subfigure}[t]{0.48\textwidth}
    \includegraphics[width=\textwidth]{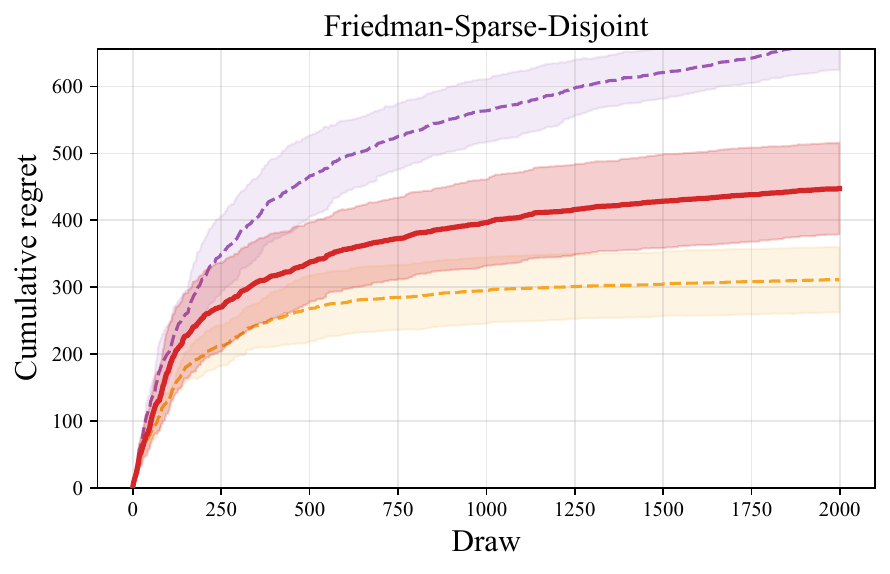}
    \caption{Friedman-Sparse-Disjoint}
  \end{subfigure}\hfill
  \begin{subfigure}[t]{0.48\textwidth}
    \includegraphics[width=\textwidth]{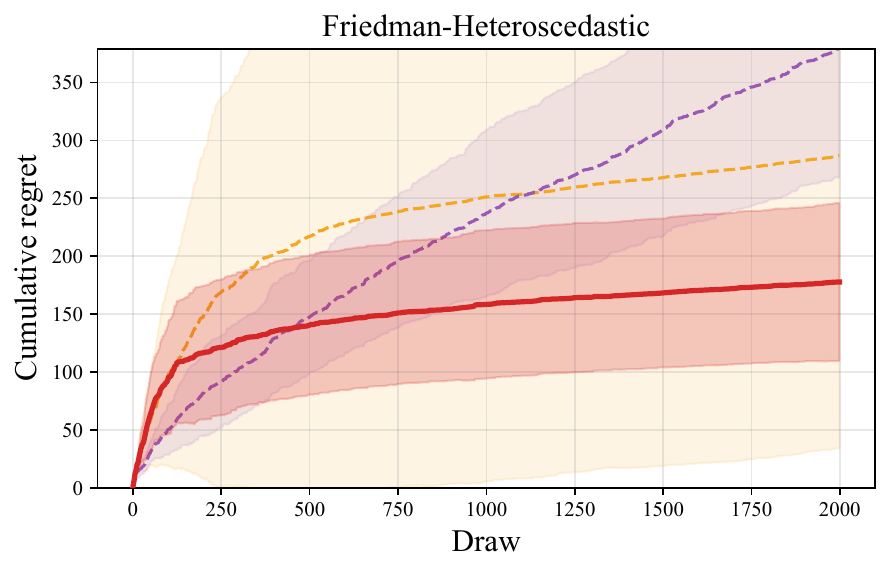}
    \caption{Friedman-Heteroscedastic}
  \end{subfigure}
  \\[1ex]
  \begin{subfigure}[t]{0.48\textwidth}
    \includegraphics[width=\textwidth]{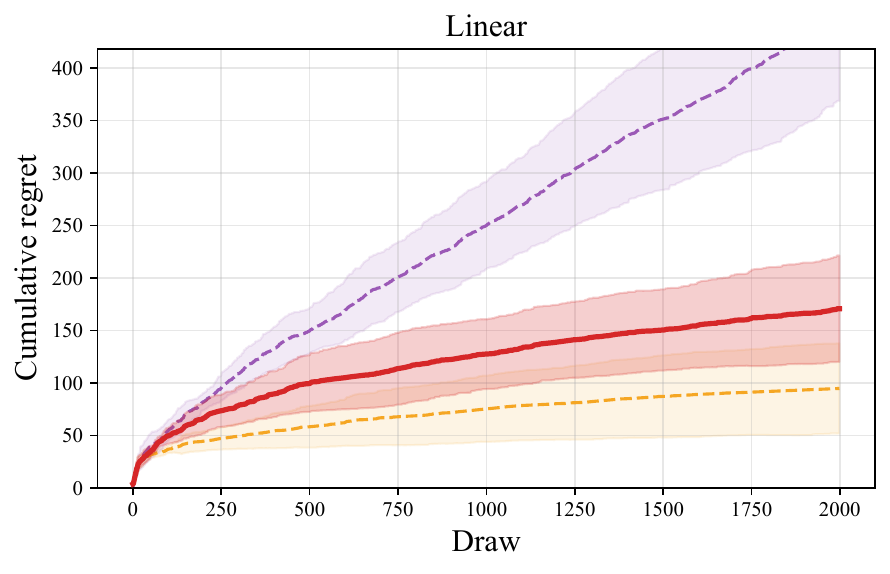}
    \caption{Linear}
  \end{subfigure}\hfill
  \begin{subfigure}[t]{0.48\textwidth}
    \includegraphics[width=\textwidth]{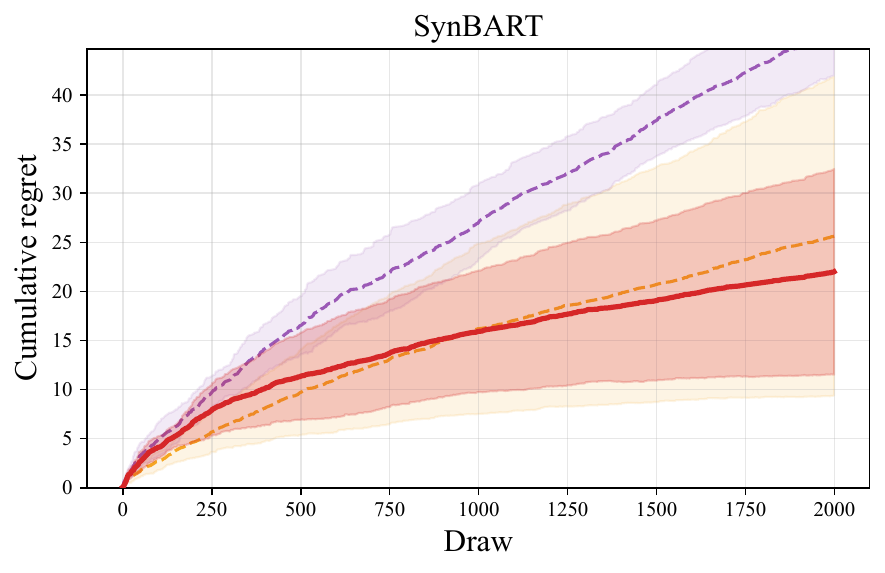}
    \caption{SynBART}
  \end{subfigure}
  \caption{Decision rule ablation: cumulative regret trajectories comparing Thompson
  sampling (PFN-TS), PFN-PS, and greedy across synthetic scenarios
  (mean $\pm$ SD, $R = 5$ replications). Continued in \Cref{fig:app:ablation_decision_openml}.}
  \label{fig:app:ablation_decision}
\end{figure}

\begin{figure}[p]
  \centering
  \begin{subfigure}[t]{0.48\textwidth}
    \includegraphics[width=\textwidth]{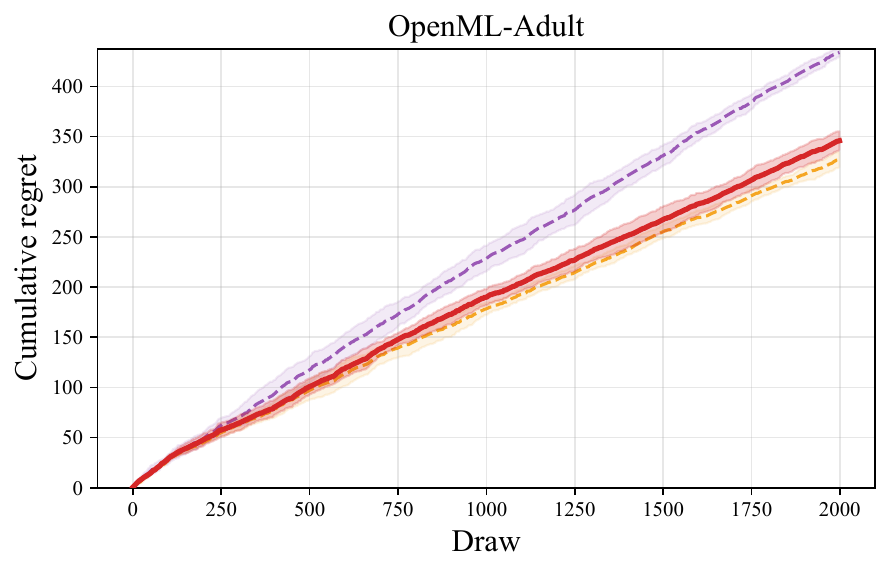}
    \caption{Adult}
  \end{subfigure}\hfill
  \begin{subfigure}[t]{0.48\textwidth}
    \includegraphics[width=\textwidth]{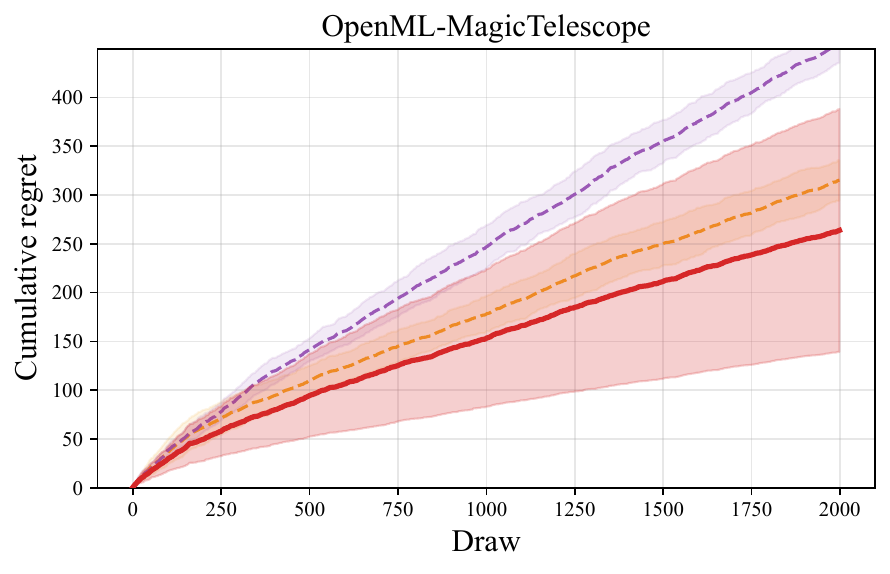}
    \caption{MagicTelescope}
  \end{subfigure}
  \\[1ex]
  \begin{subfigure}[t]{0.48\textwidth}
    \includegraphics[width=\textwidth]{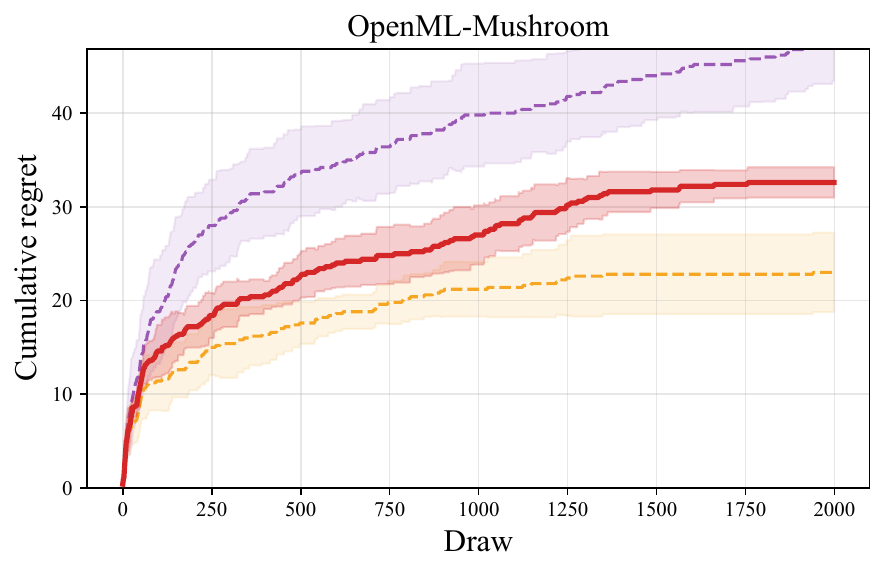}
    \caption{Mushroom}
  \end{subfigure}\hfill
  \begin{subfigure}[t]{0.48\textwidth}
    \includegraphics[width=\textwidth]{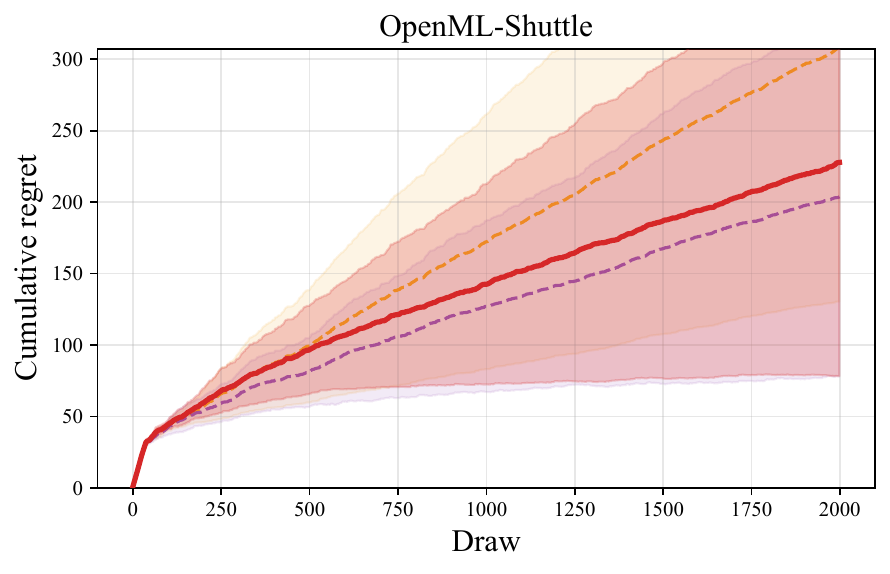}
    \caption{Shuttle}
  \end{subfigure}
  \caption{Decision rule ablation (OpenML scenarios, continued from
  \Cref{fig:app:ablation_decision}).}
  \label{fig:app:ablation_decision_openml}
\end{figure}







\end{document}